\newtheorem{theorem}{Theorem}
\newtheorem{definition}{Definition}
\newtheorem{proposition}{Proposition}
\newtheorem{lemma}{Lemma}
\newcommand{\A}{\mathcal{A}}
\newcommand{\D}{\mathcal{D}}
\newcommand{\F}{\mathcal{F}}
\newcommand{\G}{\mathcal{G}} 
\renewcommand{\H}{\mathcal{H}}
\newcommand{\I}{\mathcal{I}} 
\newcommand{\J}{\mathcal{J}}
\renewcommand{\L}{\mathcal{L}}
\renewcommand{\O}{\mathcal{O}} 
\renewcommand{\P}{\mathcal{P}}
\newcommand{\Q}{\mathcal{Q}} 
\newcommand{\R}{\mathcal{R}}
\renewcommand{\S}{\mathcal{S}} 
\newcommand{\U}{\mathcal{U}}
\newcommand{\LTL}{{\sc ltl}\xspace}
\newcommand{\LTLf}{{\sc ltl}$_f$\xspace}
\newcommand{\true}{\mathit{true}}
\newcommand{\false}{\mathit{false}}
\newcommand{\Next}{\raisebox{-0.27ex}{\LARGE$\circ$}}
\newcommand{\Until}{\mathop{\U}}
\newcommand{\reach}{{\operatorname{Reach}}\xspace}
\newcommand{\safe}{{\operatorname{Safe}}\xspace}
\newcommand{\stag}{{\sigma}}
\newcommand{\Prop}{\mathit{Prop}\xspace}
\newcommand{\last}{\mathit{last}\xspace}
\newcommand{\induced}{\tilde\stag\xspace}
\newcommand{\plays}{Plays}
\newtheorem{claim}{Claim}
\begin{document}

\title{Mimicking Behaviors in Separated Domains}

\author{\name Giuseppe De Giacomo \email degiacomo@diag.uniroma1.it \\
       \addr Sapienza University of Rome
       \AND
       \name Dror Fried \email dfried@openu.ac.il \\
       \addr The Open University of Israel
       \AND
       \name Fabio Patrizi \email patrizi@diag.uniroma1.it \\
       \addr Sapienza University of Rome
       \AND
       \name Shufang Zhu \email zhu@diag.uniroma1.it \\
       \addr Sapienza University of Rome}

\maketitle

\begin{abstract}
Devising a strategy to make a system mimicking behaviors from another system is a problem that naturally arises in many areas of Computer Science. In this work, we interpret this problem in the context of intelligent agents, from the perspective of \LTLf, a formalism commonly used in AI for expressing finite-trace properties. Our model consists of two separated dynamic domains, $\D_A$ and $\D_B$, and an \LTLf specification that formalizes the notion of \emph{mimicking} by mapping properties on behaviors (traces) of  $\D_A$ into properties on behaviors of $\D_B$. The goal is to synthesize a strategy that step-by-step maps every behavior of $\D_A$ into a behavior of $\D_B$ so that the specification is met. We consider several forms of mapping specifications, ranging from simple ones to full \LTLf, and for each we study synthesis algorithms and computational properties.   
\end{abstract}

\section{Introduction}

\emph{Mimicking} a behavior from a system $A$ to a system $B$ is a 
common practice in Computer Science (CS) and Software Engineering (SE). 
Examples include a robot that has to real-time adapt a human behavior~\cite{mitsunaga2008adapting}, 
or simultaneous interpretation of a speaker~\cite{yarmohammadi2013,ZhengLZMLH20}.
The challenge in behavior mimicking is twofold.
Firstly, a formal specification of \emph{mimicking} is needed; indeed, being 
potentially different, systems $A$ and $B$ may show substantially different
behaviors, not directly comparable, thus a relationship, or \emph{map}, between 
them must be formally defined to capture when a behavior from $A$
is correctly mimicked by one from $B$.
Secondly, since $B$ ignores what $A$ will do next, $B$ must monitor the actions
performed by $A$ and perform its own actions, in such a way that 
the resulting behavior of $B$ mimics that of $A$.

In this work, we look at the problem of devising a strategy for mimicking behaviors when the mapping specification is expressed in \emph{Linear Temporal Logic on finite traces}~(\LTLf)~\cite{DegVa13}, a formalism commonly used in AI for expressing finite-trace properties. 
In our framework, systems $A$ and $B$ are modeled by two separated dynamic domains, 
$\D_A$ and $\D_B$, in turn modeled as transition systems, 
over which there are agents $A$ and $B$ that respectively act, without affecting each other. The \emph{mapping specification} is then a set of \LTLf formulas to be taken in conjunction, called \emph{mappings}, 
that essentially relate the behaviors of $A$ to those of $B$.
While $B$ has full knowledge of both domains and their states, 
it has no idea which action $A$ will take next.
Nevertheless, in order to perform
mimicking, $B$ must respond to every action that $A$
performs on $\D_A$ by performing one action on $\D_B$.
As this interplay proceeds, $\D_A$ and $\D_B$ traverse two respective sequences of states~(traces) which we call the \emph{behaviors} of $A$ and $B$, respectively. 
The process carries on until either $A$ or $B$~(depending on the variant of the problem considered) decides to stop. 
The mimicking from $A$ has been accomplished correctly, i.e., agent $B$ \emph{wins}, 
if the resulting traces satisfy the \LTLf mapping specification. Our goal is to synthesize a \emph{strategy} for $B$, i.e.,
a function returning an action for $B$ given those executed so far by agent $A$, which guarantees that $B$ wins, i.e., is able to mimic, respecting the mappings, every behavior of $A$. We call this the Mimicking Behavior in Separated Domains (MBSD) problem.

The mapping specifications can vary, consequently changing the nature of the mimicking, and consequently
the difficulty of synthesizing a strategy for $B$.
We study three different types of mappings.
The first is the class of \emph{point-wise} mappings, which establish a sort of local connection between the two separated domains. \emph{Point-wise} mapping specifications have the form 
$\bigwedge_{i\leq k}\Box(\phi_i\rightarrow\psi_i)$ (see Section~\ref{sec:LTLfDef} for proper \LTLf definition) where 
each $\phi_i$ is a Boolean property over $\D_A$ and each $\psi_i$ is a Boolean property over $\D_B$. 
Point-wise mappings indicate invariants that are to be kept throughout the interaction between the agents. 
In Section~\ref{sec:PacMan} we give a detailed example of point-wise mappings from the Pac-Man world.

The second class is that of \emph{target} mappings, which relate the ability of satisfying corresponding reachability goals (much in the same fashion as Planning) in the two separate domains. \emph{Target} mapping specifications have the form 
$\bigwedge_{i\leq k}(\Diamond\phi_i\rightarrow \Diamond \psi_i)$, where $\phi_i$ and $\psi_i$ are Boolean properties over $\D_A$ and $\D_B$, respectively. Target mappings  define objective for $A$ and $B$ and require that if $A$ meets its objective then $B$ must meet its own as well, although not necessarily at the same time. We give a detailed example of target mappings in Section~\ref{sec:rubik}, 
from the Rubik's cube world.
The last class is that of \emph{general} \LTLf mappings.  A general \LTLf mapping specification has the form of an arbitrary \LTLf formula $\Phi$ with properties over $\D_A$ and $\D_ B$.

Our objective is to characterize solutions for strategy synthesis for mimicking behaviors under the types of mapping specifications described above, from both the algorithmic and the complexity point of view. 
The input we consider includes both domains $\D_A$ and $\D_B$, and the mapping specification. 
Since it is common to focus on problems in which either of the two is fixed (e.g.~\cite{DS18}), we provide solutions in terms of:
\emph{combined complexity}, where neither the size of the domain nor that of the mapping specification are fixed; 
\emph{mapping complexity}, where domains' size are fixed but mapping specification's varies; and 
\emph{domain complexity}, where the mapping specification's size is fixed  but domains' vary.

For our analysis, we formalize the problem as a two-player game between agent $A$~(Player~1) and agent $B$~(Player 2) 
over a game graph that combines both domains $\D_A$ and $\D_B$, 
with the winning objective varying in the classes discussed above.
We start with \emph{point-wise} mappings where $A$ decides when to stop and derive a solution 
in the form of a winning strategy for a safety game in PTIME wrt combined, mapping and domain complexity.
The scenario becomes more complex for \emph{target} mappings, where the agent $B$ decides when to stop, and where 
some objectives met during the agent's interplay must be recorded. 
We devise an algorithm exponential in the number 
of constraints, and show that the problem is in PSPACE for combined and mapping complexity, 
and PTIME in domain complexity. To seal the complexity of the problem, we provide a PSPACE-hardness proof for combined 
complexity, already for simple acyclic graph structures.
For domains whose transitions induce a tree-like structure, however,  we show that the problem is still in PTIME for combined, mapping and domain complexity. 
Finally, we show that the problem with \emph{general} \LTLf mapping specifications is in 2EXPTIME for combined and mapping complexity, 
due to the doubly-exponential blowup of the DFA construction for \LTLf formulas, and is PTIME in domain complexity. 

The rest of the paper goes as follows. In Section~\ref{sec:prelims} we give preliminaries, and we formalize our problem in Section~\ref{sec:probDef}. We give detailed examples and analyses of point-wise and target mapping specifications in Sections~\ref{sec:pointwise} and~\ref{sec:targets} respectively. We discuss solution for general mapping specifications in Section~\ref{sec:general}. Then we provide a more detailed discussion about related work in Section~\ref{sec:related}, and conclude in Section~\ref{sec:conclusion}.

\section{Preliminaries}\label{sec:prelims}
We briefly recall preliminary notions that will be used throughout the paper. 

\subsection{Boolean Formulas}
Boolean (or propositional) formulas are defined, as standard, over a set of propositional variables 
(or, simply, \emph{propositions}) $\Prop$, by applying the 
Boolean connectives $\wedge$~(and), $\vee$~(or) and $\neg$~(not). 
Standard abbreviations are $\rightarrow$~(implies), $\true$ (also denoted $\top$) and $\false$ (also denoted $\bot$). 
A proposition $p \in \Prop$ occurring in a formula is called an \emph{atom}, 
a \emph{literal} is an atom or a negated atom $\lnot p$, 
and a \emph{clause} is a disjunction of literals. 
A Boolean formula is in Conjunctive Normal Form~(CNF), if it is a conjunction of clauses. 
The size of a Boolean formula $\varphi$, denoted $|\varphi|$, is the number of connectives occurring in $\varphi$.
A Quantified Boolean Formula (QBF) is a Boolean formula, all of whose variables are universally or existentially quantified. 
A QBF formula is in Prenex Normal Form (PNF) if all quantifiers occur in the prefix of the formula. 
True Quantified Boolean Formulas (TQBF) is the language of all QBF formulas in PNF that evaluate 
to $true$. TQBF is known to be PSPACE-complete.

\subsection{\textsc{LTL}$_f$ Basics}\label{sec:LTLfDef}
Linear Temporal Logic over finite traces~(\LTLf) is an extension of propositional logic to describe temporal properties on finite~(unbounded) traces~\cite{DegVa13}. \LTLf has the same syntax as \LTL, one of the most popular logics for temporal properties on infinite traces~\cite{Pnu77}. Given a set of propositions $\Prop$, the formulas of \LTLf are generated by the following grammar:
$$
\varphi ::= p \mid (\varphi_1 \wedge \varphi_2) \mid (\neg \varphi) \mid  
(\Next \varphi) \mid (\varphi_1  \Until \varphi_2)
$$
where $p \in \Prop$, $\Next$ is the \emph{next} temporal operator and $\Until$ is the \emph{until} temporal operator, both are common in \LTLf. We use common abbreviations for \emph{eventually} $\Diamond \varphi \equiv \true \Until \varphi$ and \emph{always} as $\Box \varphi \equiv \lnot \Diamond \lnot \varphi$. %

A \emph{word} over $\Prop$ is a sequence $\pi = \pi_0 \pi_1\cdots$, s.t.~$\pi_i\subseteq 2^\Prop$, 
for $i\geq 0$. Intuitively, $\pi_i$ is interpreted as the set of propositions that are $true$ at instant $i$.
In this paper we deal only with \emph{finite}, nonempty words, 
i.e., $\pi = \pi_0\cdots \pi_n\in (2^{\Prop})^+$. 
$\last(\pi)$ denotes the last instant~(index) of $\pi$.

Given a finite word $\pi$ and an \LTLf formula 
$\varphi$, we inductively define when $\varphi$ is $true$ on $\pi$ at instant $i\in\{0,\ldots,\last(\pi)\}$, 
written $\pi, i \models \varphi$, as follows: 
\begin{itemize}
	\item 
	$\pi, i \models p$ iff $p \in \pi_i$ (for $p\in{\Prop}$);
	\item 
	$\pi, i \models \varphi_1 \wedge \varphi_2$ iff $\pi, i \models \varphi_1$ and $\pi, i \models \varphi_2$;
	\item 
	$\pi, i \models \lnot \varphi$ iff $\pi, i \not\models \varphi$;
	\item 
	$\pi, i \models \Next\varphi$ iff $i< \last(\pi)$ and $\pi,i+1 \models \varphi$;
	\item 
	$\pi, i \models \Box\varphi$ iff $\forall j. i\leq j \leq \last(\pi)$ and $\pi,j \models \varphi$;
	\item 
	$\pi, i \models \Diamond\varphi$ iff $\exists j. i\leq j \leq \last(\pi)$ and $\pi,j \models \varphi$;
	\item 
	$\pi, i \models \varphi_1 \Until \varphi_2$ iff $\exists j. i \leq j \leq \last(\pi)$ and $\pi,j \models\varphi_2$, and $\forall k. i\le k < j$ we have that $\pi, k \models \varphi_1$.
	
	In this paper, we make extensive use of $\Box\varphi$ and $\Diamond\varphi$.

\end{itemize}
We say that $\pi \in (2^\Prop)^{+}$ \emph{satisfies} an 
\LTLf formula $\varphi$, written $\pi \models \varphi$, if $\pi, 0 \models \varphi$. For every \LTLf formula $\varphi$ defined over $\Prop$, we can construct a Deterministic Finite Automaton~(DFA) $\F_\varphi$ that accepts exactly the traces that satisfy $\varphi$~\cite{DegVa13}. More specifically, $\F_\varphi = (2^\Prop, Q, q_0, \eta, acc)$, where $2^\Prop$ is the alphabet of the DFA, $Q$ is the finite set of states, $q_0 \in Q$ is the initial state, $\eta: Q \times 2^\Prop \to Q$ is the transition function, and $acc \subseteq Q$ is a set of accepting states.

\subsection{Two-player Games}\label{sec:2players}
A \emph{(turn-based) two-player game} models a game between two players, 
Player 1 ($P1$) and Player 2 ($P2$),
formalized as a pair $\G = (\A, W)$, with $\A $ the \emph{game arena} 
and $W$ the \emph{winning objective}.
The arena $\A= (U, V, u_0, \alpha, \beta)$ is essentially a bipartite-graph, where:
\begin{itemize}
	\item 
	$U$ is a finite set of $P1$ nodes;
	\item 
	$V$ is a finite set of $P2$ nodes;
	\item
	$u_0 \in  U$ is the initial node;
    \item
	$\alpha \subseteq U \times V$ is the transition relation of $P1$;
    \item
	$\beta \subseteq V \times U$ is the transition relation of $P2$.
\end{itemize}
Intuitively, a token initially in $u_0$ is moved in turns from nodes in $U$ to nodes in $V$ and vice-versa. 
$P1$ moves when the token is in a node $u\in U$, by choosing a destination node $v\in V$ for the token,
such that $(u,v)\in\alpha$.
$P2$ acts analogously, when the token is in a node $v\in V$, by choosing a node $u\in U$
according to $\beta$.
Thus, $P1$ and $P2$ alternate their moves, with $P1$ playing first, until at some point, after $P2$ has moved, 
the game stops.
As the token visits the nodes of the arena, it defines a sequence of alternating $U$ and $V$ nodes
called \emph{play}. If, when the game stops, the play meets $W$, then $P2$ wins, otherwise $P1$ wins.

Formally, a \emph{play} (of $\A$) $\rho=\rho_0\cdots \rho_n\in (U\cup  V)^+$ 
is a finite, nonempty sequence of nodes such that:
\begin{itemize}
	\item $\rho_0=u_0$;
	\item $(\rho_i,\rho_{i+1})\in\alpha$, for $i$ even;
	\item $(\rho_i,\rho_{i+1})\in\beta$, for $i$ odd;
	\item $n$ is even (which implies, by $\alpha$ and $\beta$, that $\rho_n\in U)$.
\end{itemize}
Let $\plays_\A$ be the set of all plays of $\A$ and 
let $\last(\rho)=n$ be the last position (index) of play $\rho$.
$\rho|_U= \rho_0 \rho_2 \cdots \rho_n$ is the \emph{projection of $\rho$ on $ U$}.
and  
$\rho|_V = \rho_1 \rho_3\cdots \rho_{n-1}$ is the \emph{projection} of $\rho$ on $ V$.
The \emph{prefix} of $\rho$ ending at the $i$-th state is denoted as $\rho^i = \rho_0 \cdots \rho_i$.

The \emph{winning objective} $W$ is a (compact) representation of a set of plays, called \emph{winning plays}. 
$P2$ \emph{wins} if the game produces a winning play, otherwise $P1$ wins.
A \emph{strategy} for $P2$ is a function $\stag:  V^+\rightarrow U$, which returns a 
$P1$ node $u \in  U$, given 
a finite sequence of $P2$ nodes. 
A strategy $\stag$ is said to be \emph{memory-less} if, for every two 
sequences of nodes $w=w_0\cdots w_n$ and $w'=w'_0\cdots w'_m\in V^+$, 
whenever $w_n=w_m$, it holds that $\stag(w)=\stag(w')$; 
in other words, the move returned by $\stag$ 
is a function of the last node in the sequence.
A play $\rho$ 
is \emph{compatible} with a $P2$ strategy $\stag$ if $\rho_{i+1}=\stag(\rho^{i}|_ V)$, for $i=0,\ldots,\last(\rho)-1$. 
A $P2$ strategy $\stag$ is \emph{winning} in $\G =(\A, W)$, 
if every play $\rho$ compatible with $\stag$ is winning.

In this paper we consider two classes of games.
The first class is that of \emph{reachability games} in which 
for a set $g\subseteq U$ of $P1$ nodes,   $W=\reach(g)$, where  $\reach(g)$ (\emph{reachability objective}) is the set of plays containing at least one node from $g$. Formally
$\reach(g)= \{\rho \in \plays_\A \mid~\text{there exists } k. 0 \leq k \leq \last(\rho): \rho_k \in g\}$. 

The second class is that of \emph{safety games}, in which again
for a set $g\subseteq U$ of $P1$ nodes, $W=\safe(g)$, where $\safe(g)$  (\emph{safety objective}) is the set of plays where all $P1$ nodes are from $g$.
Formally,
$\safe(g)= \{ \rho \in \plays_\A  \mid \text{for all even }k. 0 \leq k \leq \last(\rho): \rho_k \in g\}$. 
Both reachability and safety games can be solved in PTIME in the size of $\G$,
and if there is a winning strategy for $P2$ in $\G$ then, and only then, there is a winning memory-less strategy
for $P2$ in $\G$~\cite{Mar75}.

\section{Mimicking Behaviors in Separated Domains}\label{sec:probDef}

The problem of mimicking behaviors involves two agents, $A$ and $B$, each operating in
its own domain, $\D_A$ and $\D_B$ respectively, and requires $B$ to 
``correctly'' mimic in $\D_B$, the behavior~(i.e., a trace) exhibited by $A$ in $\D_A$. The notion of ``correct mimicking" is formalized by a
 \emph{mapping specification}, or simply \emph{mapping}, which is an 
\LTLf formula, specifying when a behavior of $A$ correctly maps into one of $B$.
The agents alternate their moves on their respective domains,
with $A$ starting first, until one of the two decides to stop. 
Only one agent $A$ and $B$, designated as the \emph{stop} agent, 
has the power to stop the process, and can do so only after both $A$ and $B$ 
have moved in the last turn. The mapping constraint is evaluated 
only when the process has stopped.

The dynamic domains where agents operate are modeled as labelled transition systems.
\begin{definition}[Dynamic Domain]
A \emph{dynamic domain} over a finite set $\Prop$ is a tuple $\D = (S, s_0, \delta,\lambda)$, s.t.: 
\begin{itemize}
	\item 
	$S$ is the finite set of domain states;
	\item
	$s_0 \in S$ is the initial domain state;
	\item 
	$\delta\subseteq \S\times\S$ is the transition relation;
	\item $\lambda:\S\mapsto 2^\Prop$  is the state-labeling function.
\end{itemize}
\end{definition}
With a slight abuse of notation, for every state $s\in S$, we define the set of \emph{possible successors} 
of $s$ as $\delta(s)=\{s'\mid (s,s')\in\delta\}$. $\D$ is deterministic in the sense that
given $s$, the agent operating in $\D$ can select the transition leading to the next 
state $s'$ from those available in $\delta(s)$. Without loss of generality, we assume that 
$\D$ is \emph{serial}, i.e., $\delta(s)\neq\emptyset$ for every state $s\in\S$.
A \emph{finite trace} of $\D$ is a sequence of states $\tau = s_0\cdots s_n$ 
s.t.~$s_{i+1}\in\delta(s_i)$, for $i=0,\ldots,n-1$. 
\emph{Infinite traces} are defined analogously, except that  $i=0,\ldots,\infty$.
By $|\tau|$ we denote the length of $\tau$, i.e., the (possibly infinite) number of states 
it contains.
In the following, we simply use the term \emph{trace} for a finite trace, and explicitly 
specify when it is infinite.

We next model the problem of mimicking behaviors by two dynamic systems over disjoint 
sets of propositions, together with an \LTLf formula specifying the 
mapping, and the designation of the \emph{stop} agent.
\begin{definition}\label{def:probdef}
    An instance of the \emph{Mimicking Behaviors in Separated Domains} (MBSD) problem 
    is a tuple $\P = (\D_A, \D_B, \Phi, Ag_{stop})$, where: 
    \begin{itemize}
        \item 
        $\D_A = (S,s_0,\delta^A,\lambda^A)$ is %
        a dynamic domain over  $\Prop^A$;
        \item 
        $\D_B = (T,t_0,\delta^B,\lambda^B)$ is %
        a dynamic domain over $\Prop^B$, with $\Prop^A\cap\Prop^B=\emptyset$;
        \item
        $\Phi$ is the \emph{mapping specification}, i.e., an \LTLf formula over $\Prop^A\cup\Prop^B$;
        \item
        $Ag_{stop} \in \{A,B\}$ is the designated \emph{stop agent}.
    \end{itemize}
\end{definition}
Intuitively, a solution to the problem is a \emph{strategy} for agent $B$ that allows $B$ to step-by-step
map the observed behavior of agent $A$ into one of its behaviors, in such a way 
that the mapping specification is satisfied, according to the formalization provided next.

Formally, a \emph{strategy} for agent $B$ is a function $\stag: (S)^+ \rightarrow T$
which returns a state of $\D_B$, given a sequence of states of $\D_A$.
Observe that this notion is fully general and is defined on \emph{all} $\D_A$'s 
state sequences, even non-traces. Among such strategies, 
we want to characterize those that allow $B$ to satisfy the 
mapping specification by executing actions only on $\D_B$.

We say that a strategy $\stag$ is \emph{executable} in $\P$ if:
\begin{itemize}
	\item $\sigma(s_0)=t_0$;
	\item $\sigma(\tau^A)$ is defined on every trace $\tau^A$ of $\D_A$;
	\item for every trace $\tau^A=s_0\cdots s_n$ of $\D_A$, 
	the sequence $\tau^B=\sigma(s_0)\sigma(s_0s_1)\cdots\sigma(s_0s_1\cdots s_n)$
	is a trace of $\D_B$ (of same length as that of $\tau^A$). 
\end{itemize}
When $\stag$ is executable, the trace $\tau^B$ as above is called the \emph{trace induced by $\sigma$ on $\tau^A$},
and denoted as $\induced(\tau^A)$.

For two traces $\tau^A=s_0\cdots s_n$ and $\tau^B=t_0\cdots t_n$ 
of $\D_A$ and $\D_B$, respectively, we define their \emph{joint trace label}, denoted 
$\lambda(\tau^A,\tau^B)$ as the word over $2^{\Prop^A\cup\Prop^B}$ 
s.t.~$\lambda(\tau^A,\tau^B)=(\lambda^A(s_0)\cup\lambda^B(t_0))\cdots (\lambda^A(s_n)\cup\lambda^B(t_n))$.
In words, $\lambda(\tau^A,\tau^B)$ is the word obtained by joining the labels of the states of
$\tau_A$ and $\tau_B$ at same positions.

We can now characterize solution strategies.
\begin{definition}\label{def:solution}
	A strategy $\stag$ is a solution to an MBSD problem instance $\P = (\D_A, \D_B, \Phi, Ag_{stop})$, 
	if $\stag$ is executable in $\P$ and either:
	\begin{enumerate}
		\item\label{sol:1} $Ag_{stop}=A$ and every trace $\tau^A$ of $\D_A$ is
			s.t.~$\lambda(\tau^A,\induced(\tau^A))\models\Phi$; or 
		\item\label{sol:2} $Ag_{stop}=B$ and every infinite trace $\tau^A_\infty$ of $\D_A$
			has a finite prefix $\tau^A$
			s.t.~$\lambda(\tau^A,\induced(\tau^A))\models\Phi$.
	\end{enumerate}
\end{definition}
The definition requires that the strategy $\stag$ be executable in $\P$, i.e., that
$\sigma$ returns an executable move for $B$, whenever $A$ performs an executable move.
Then, two cases are identified, which correspond to the possible designations of the
stop agent.
In case~\ref{sol:1}, the stop agent is $A$. In this case, since $A$ can stop at any
time point (unknown in advance by $B$), $B$ must be able to \emph{continuously} (i.e., step-by-step) 
mimic $A$'s behavior, otherwise $A$ could stop at a point where $B$ fails to mimic.
Case~\ref{sol:2} is slightly different, as $B$ can choose when to stop.
In this case, $\stag$ must prescribe a sequence of moves, 
in response to $A$'s, such that $\Phi$ is eventually (as opposed to continuously) satisfied, at which point
$B$ can stop the execution. 
Seen differently, $\stag$ must prevent $A$ from 
moving indefinitely, over an infinite horizon (without $B$ ever being able to 
mimic $A$). 
\section{Mimicking Behaviors with Point-wise Mapping Specifications}\label{sec:pointwise}

In this section, we explore mimicking specifications that are of \emph{point-wise} nature.
This setting requires that $B$, while mimicking $A$, constantly satisfies certain
conditions, which can be regarded as \emph{invariants}.  
Such a requirement is formally captured by the following specification,
where $\varphi_i$ and $\psi_i$ are Boolean formulas over $\D_A$ and $\D_B$, 
respectively:
\[\varphi = \bigwedge_{i=1}^k  
\Box(\varphi_i \rightarrow \psi_i).\]

We first provide an illustrative example that demonstrates the use of point-wise mappings, 
then explore algorithmic and complexity results.

\subsection{Point-wise Mapping Specifications in the Pac-Man World}\label{sec:PacMan}
In the popular game Pac-Man, the eponymous character 
moves in a maze to eat all the candies.
Four erratic ghosts, Blinky, Pinky, Inky and Clyde, wander around, threatening
Pac-Man, which cannot touch them or looses
(we neglect the special candies with which Pac-Man can fight the ghosts). 
The ghosts cannot eat the candies.
In the real game, the maze is continuous but, for simplicity, we consider a grid model 
where cells are identified by two coordinates.
Also, we imagine a variant of the game where the ghosts can walk through
walls.
Pac-Man wins the stage when it has eaten all the candies. 
The ghosts end the game when this happens.

We model this scenario as an MBSD problem $\Q=(\G,\P,\Phi,A)$, 
with domains 
$\P$(ac-Man, agent $B$) and $\G$(hosts, agent $A$).
In $\P$, states model Pac-Man's and candies's position, 
while transitions model Pac-Man's move actions.
Pac-Man cannot walk through walls.
A candy disappears when Pac-Man moves on it.
Similarly, states of $\G$ model (all) ghosts' position,
and transitions model ghosts' movements through cells. 
Each transition corresponds to a move of all ghosts at once.
$\G$ does not model candies or walls, as they do not affect nor are 
affected by ghosts.

Assuming an $N\times N$ grid with some cells 
occupied by walls, domain  $\P=(S, s_0, \delta^p,\lambda^p)$ 
is as follows, where $C$ is the set of cells $(x,y)$ not containing a wall:
\begin{itemize}
	\item for every $(x,y)\in C$, introduce the
		Boolean propositions $p_{x,y}$ (Pac-Man at $(x,y)$) 
		and $c_{x,y}$ (candy at $(x,y)$), and let $\Prop^p$ be the set of all such propositions;
	\item $\S \subseteq 2^{(\Prop^p)}$ 
		is the set of all interpretations over $\Prop^p$ (represented as subsets of $\Prop^p$),
		such that: 
			\begin{itemize}
				\item every $s\in S$ contains exactly one proposition $p_{x,y}$ (Pac-Man occupies exactly one cell);
				\item for every $s\in S$, if $p_{x,y}\in s$ then $c_{x,y}\notin s$ 
					(if Pac-Man is in $(x,y)$ the cell contains no candy);
			\end{itemize}
	\item	let $s_0=\{p_{0,0}\}\cup \{c_{x,y}\mid (x,y)\in C\setminus{(0,0)}\}$
		(Pac-Man in $(0,0)$; cells without Pac-Man or walls contain a candy);
	\item $\delta^p$ is such that $(s,s')\in \delta^p$ iff, for all $(x,y)\in C$:
		\begin{itemize}
			\item if $p_{x,y}\in s$ then $p_{x',y'}\in s'$, with
				$(x,y)\in\{(x,y),(x,y+1),(x,y-1),(x+1,y),(x-1,y))\}$ 
				(Pac-Man moves at most by one cell, either horizontally or diagonally);
			\item if $c_{x,y}\in s$ and $p_{x,y}\notin s'$ then $c_{x,y}\in s'$ 
				(all candies available in $s$ remain so if not eaten by Pac-Man).
		\end{itemize} 
	\item $\lambda^p(s)=s$.
\end{itemize}

\noindent 
Domain $\G=(T, t_0, \delta^g,\lambda^g)$ is defined in a similar way
(we omit the formal details):
we use propositions $bk_{x,y}, pk_{x,y}, ik_{x,y}, cd_{x,y}$ for  
Blinky, Pinky, Inky and Clyde's position, respectively; 
$T$ is the set of interpretations where
each ghost occupies exactly one cell (possibly containing a wall;
many ghosts may be in the same cell); the ghosts start
at $(N/2,N/2)$ ($t_0$); $\delta^g$ models a 
1-cell horizontal or diagonal move for all ghosts at once;
$\lambda^g$ is the identity.

Pac-Man's primary goal (besides eating all candies) is to stay alive, 
which we formalize with the following point-wise mapping: 
\[\Phi=\bigwedge_{(x,y)\in C} \Box((bk_{x,y}\lor pk_{x,y}\lor ik_{x,y}\lor cl_{x,y})\rightarrow\lnot p_{x,y}).\]
Any strategy $\stag$ that is a solution to $\Q=(\G,\P,\Phi,B)$ keeps Pac-Man alive.
To enforce $\Phi$, Pac-Man needs a strategy that prevents ending up 
in a cell where a ghost is. Notice that, to compute $\stag$, one cannot 
proceed greedily by considering only one step at a time, but must plan 
over all future evolutions, to guarantee that Pac-Man does not eventually get
trapped. With such $\stag$, no matter when the ghosts end the game, 
Pac-Man will never lose (and, in fact, it will win, if the ghosts stop
when all candies on the maze have been eaten).

\subsection{Solving MBSD with Point-wise Mapping Specifications}\label{sec:pointExp}

We show how to solve an MBSD instance $\P$ by reduction to the 
problem of finding a winning strategy in a two-player game, for which algorithms are 
well known~\cite{Mar75}. Specifically, we construct a two-player game 
$\G_\P = (\A, W)$ that has a winning strategy iff $\P$ has a solution. 

Given an MBSD instance $\P = (\D_A, \D_B, \Phi, Ag_{stop})$, 
with $\D_A = (S, s_0,\delta^A,\lambda^A)$ and 
$\D_B = (T, t_0,\delta^B,\lambda^B)$, 
we construct the game arena $\A= (U, V, u_0, \alpha, \beta)$, where:
\begin{itemize}
    \item $U = S \times T$;
    \item $V= S \times T$;
    \item $u_0 = (s_0, t_0)$;
    \item $\alpha = \{(s,t), (s', t)\mid (s,s')\in \delta^A\} $;
    \item $\beta = \{(s,t), (s,t')\mid  (t,t')\in \delta^B\}$.
\end{itemize}
Intuitively, the nodes of $\A$ represent joint state configurations of both $\D_A$ and $\D_B$
(initially in their respective initial states),
while the transition functions account for the moves $A$ (modeled by $P1$) and 
$B$ (modeled by $P2$) can perform, imposing, at the same time, their strict alternation.

As for the winning objective $W$, the key idea  is that, since in point-wise mappings 
the temporal operator $\Box$ (\emph{always}) distributes over conjunction, 
and since $Ag_{stop}=A$, the conjuncts of the mapping are in fact propositional formulae
to be guaranteed all along the agent behaviors, captured by plays of $\A$.
This can be easily expressed as a safety objective on $\A$, as shown below.

Let $\Phi=\bigwedge_{i=1}^k  \Box(\varphi_i \rightarrow \psi_i)$ be the 
(point-wise) mapping specification.  
We have that $\Phi\equiv \Box\Phi'$, where 
$\Phi'\equiv\bigwedge_{i=1}^k  
(\varphi_i \rightarrow \psi_i)$ is a Boolean formula where every $\varphi_i$ is 
over $\Prop^A$ only and every $\psi_i$ over $\Prop^B$ only.
Therefore, in order to solve $\P$, we need to find a strategy $\sigma$ such that for every trace 
$\tau^A$ of $\D_A$, $\lambda(\tau^A,\induced(\tau^A))\models \Box\Phi'$, 
that is, $\lambda^A(s_j)\cup\lambda^B(t_j)\models\Phi'$ for $j=0,\ldots,|\tau^A|$.
Thus we can set $W=\safe(g)$,
with $g =\{(s,t)\in U\mid \lambda^A(s)\cup\lambda^B(t)\models \Phi'\}$.

As a consequence of the above construction, we obtain the following result.

\begin{lemma}\label{lem:solpairwise}
There is a solution to $\P$ if and only if there is a solution to the safety game $\G_\P$.
\end{lemma}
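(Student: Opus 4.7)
The plan is a biconditional proof that hinges on a natural bijection between (trace of $\D_A$, executable response by $B$) pairs and plays of the arena $\A$. Throughout, I assume $Ag_{stop}=A$, as the construction is tailored to Case~\ref{sol:1} of Definition~\ref{def:solution}: a solution strategy must satisfy $\Phi$ along every trace of $\D_A$.

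The semantic bridge I would first establish is the following. Given a trace $\tau^A = s_0\cdots s_n$ of $\D_A$ and an executable $\sigma$ inducing $\induced(\tau^A) = t_0\cdots t_n$, the sequence
\[
\rho \;=\; (s_0,t_0)\,(s_1,t_0)\,(s_1,t_1)\,(s_2,t_1)\,\cdots\,(s_n,t_{n-1})\,(s_n,t_n)
\]
is a play of $\A$, since the $\alpha$-edges are supplied by $(s_j,s_{j+1})\in\delta^A$ and the $\beta$-edges by $(t_j,t_{j+1})\in\delta^B$; conversely every play of $\A$ decomposes uniquely in this form. The $U$-nodes of $\rho$ are exactly $(s_j,t_j)$ for $j=0,\ldots,n$, so by the choice of $g$ and the fact that for point-wise $\Phi$ we have $\Phi \equiv \Box\Phi'$ with $\Phi'$ a Boolean formula over $\Prop^A\cup\Prop^B$,
\[
\rho \in \safe(g) \iff \lambda^A(s_j)\cup\lambda^B(t_j) \models \Phi' \text{ for all } j \iff \lambda(\tau^A,\induced(\tau^A)) \models \Phi.
\]

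With this bridge in hand, I would translate strategies in both directions. For $(\Rightarrow)$, given an executable solution $\sigma$, define $\tau(v_1\cdots v_m) = (s_m,\sigma(s_0 s_1\cdots s_m))$, where $s_i$ denotes the first component of $v_i$; this $\tau$ respects $\beta$ because executability of $\sigma$ guarantees $(\sigma(s_0\cdots s_{m-1}),\sigma(s_0\cdots s_m))\in\delta^B$, and any play compatible with $\tau$ arises from some trace of $\D_A$, so by the bridge every such play lies in $\safe(g)$. For $(\Leftarrow)$, given a winning $P2$-strategy $\tau$, define $\sigma(s_0\cdots s_m)$ recursively as the $T$-component of the $U$-node produced by simulating the unique play prefix compatible with $\tau$ on the $\D_A$-state sequence $s_0,\ldots,s_m$; executability is immediate from $\tau$ moving along $\beta$, and the bridge then yields that every trace is mapped to a satisfying pair.

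The main subtlety I anticipate is a mismatch in the domains of definition: a $B$-strategy $\sigma$ must be defined on every element of $S^+$, whereas $\tau$ need only respond to $V$-sequences projecting from actual play prefixes (equivalently, from actual traces of $\D_A$). I would handle this by extending $\sigma$ arbitrarily on non-trace inputs, e.g.\ by fixing an arbitrary $\delta^B$-successor, since Definition~\ref{def:solution} quantifies $\sigma$ only over traces of $\D_A$, making such extensions irrelevant to solutionhood. Beyond this, the verification is a routine unfolding of the trace-play bijection, combined with the equivalence $\rho\in\safe(g) \Leftrightarrow \lambda(\tau^A,\induced(\tau^A))\models\Phi$ established above.
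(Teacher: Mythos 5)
Your proposal is correct and follows essentially the same route as the paper's proof: the ``semantic bridge'' you set up is exactly the paper's Claims~\ref{lem:bijection} and~\ref{lem:bijection2} (the correspondence between plays compatible with the translated strategy and pairs $(\tau^A,\induced(\tau^A))$), and both directions then reduce to the equivalence $\rho\in\safe(g)\Leftrightarrow\lambda(\tau^A,\induced(\tau^A))\models\Box\Phi'$. Your explicit remark about extending $\sigma$ arbitrarily on non-trace inputs is a small tidiness improvement over the paper's treatment, but not a different argument.
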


\begin{proof}
As an intuition,  notice that once computed, a winning strategy for $\G_\P$ is essentially a solution to $\P$. 
This, indeed, can be obtained by 
projecting away the $V$ component of all the nodes in a play $\rho$, thus transforming $\rho$ into a trace of $\D_A$.

We now show the proof in detail. We first show that if there is a solution to $\P$ then there is a solution to $\G_\P$.
For that, we first show that if $\stag$ is an executable strategy for $\P$ then $\stag$ can be reduced 
to a strategy $\stag'$ for $G_\P$.
To this end, consider a play $\rho=\rho_0\rho_1\cdots\rho_n$,
with $\rho_i=(s_i,t_i)$ and a state $(s_{n+1},t_n)$ such that 
$((s_n,t_n),(s_{n+1},t_n))\in\alpha$. 
Let $\tau = s_0s_1s_3\cdots s_{n-1} s_{n+1}\in V^{n/2+1}$.
By the definition of $\G_\P$, $\tau$ is a trace of $\D_A$.
Therefore, since $\stag$ is executable, $\stag$ is defined on $\tau$. 
Thus, for $\rho'=\rho\circ(s_{n+1},t_n)$, where $\circ$ denotes concatenation,
we can define $\stag'(\rho')=(s_{n+1},\stag(\tau))$. 
Note that this is a proper definition since the trace $\tilde\stag(\tau)$ induced 
by $\stag$ on $\tau$ is a trace in $\D_B$, hence $(t_n,\stag(\tau))\in\delta^B$.
Thus $\stag'$ is a proper strategy for $\G_\P$.

Next, we need the following claim that describes the correspondence between $\stag$ and $\stag'$.

\begin{claim}\label{lem:bijection}
A sequence $\rho=\rho_0\cdots\rho_n\in(U\cup V)^+$ is a play of $\G_\P$
compatible with $\stag'$ iff 
there exist a trace $\tau^A=s_0\cdots s_n$ of $\D_A$
and a trace $\tau^B$ of $\D_B$ such that 
$\tau^B=\tilde\stag(\tau^A)=t_0\cdots t_n$ and 
$\rho=(s_0,t_0)(s_1,t_0)\cdots(s_n,t_{n-1})(s_n,t_n)$.
\end{claim}

For a proof of Claim~\ref{lem:bijection}, given a trace $\tau^A=s_0\cdots s_n$ of $\D_A$, let 
$\tau^B=\tilde\stag(\tau^A)=t_0\cdots t_n$ 
be the trace of $\D_B$ induced by $\stag$ on $\tau^A$. 
By the definition of $\G_\P$ and that of $\stag'$ provided above, 
it follows that the sequence
$\rho=(s_0,t_0)(s_1,t_0)\cdots(s_n,t_{n-1})(s_n,t_n)$
is a play of $\G_\P$ compatible with $\stag'$. 
On the other hand, for a play 
$\rho=(s_0,t_0)\cdots(s_n,t_n)$ compatible with $\stag'$,
again by the definition of $\G_\P$ and $\stag'$, we 
have that the sequences $\tau_A=s_0\cdots s_n$ 
and $\tau_B=t_0\cdots t_n$ are traces of, respectively 
$\D_A$ and $\D_B$, such that $\tau_B=\tilde\stag(\tau_A)$.

Back to proving Lemma~\ref{lem:solpairwise}, since $\stag$ is a solution, every trace $\tau^A$ in $\D_A$ is such that $\lambda(\tau^A,\induced(\tau^A))\models\Phi$. For $\tau^A=s_0\cdots s_n$, let $\tau^B=\induced(\tau^A)=t_0\cdots t_n$. 
Because $\Phi=\Box\Phi'$ is a point-wise mapping, for every $i$, we have that $(\lambda^A(s_i),\lambda^B(t_i))\models\Phi'$,
that is, in $\G_\P$, $(s_i,t_i)\in g$.

Now, let $\rho=(s_0,t_0)(s_1,t_0) \cdots (s_n,t_{n-1})(s_n,t_n)$ be a play in $\G_\P$ compatible with  $\stag'$
(recall $(s_n,t_n)\in U$). 
By Claim~\ref{lem:bijection}, the sequences  
$\tau^A=s_0\cdots s_n$ and $\tau^B=t_0\cdots t_n$ are traces of $\D_A$ and $\D_B$, respectively,
with $\tau^B = \tilde\stag(\tau^A)$. 
Then $(\lambda^A(s_n),\lambda^B(t_n)\models\Phi'$, that is $(s_n,t_n)\in g$. 
Since $\rho$ is arbitrary, every play in $G_\P$ compatible with $\stag'$ ends in a $g$ node, 
hence $\stag'$ is a winning strategy for $P2$ in $\safe(g)$.
That completes the first direction of the theorem.

For the other direction, assume that $\stag'$ is a strategy for $\G_\P$.
Define a strategy $\stag''$ for $\P$ as follows. Define first $\stag''(s_0)=t_0$. Then,  
For a play $\rho=\rho_0\rho_1\cdots\rho_n$,
with $\rho_i=(s_i,t_i)$, and a state $(s_{n+1},t_n)$ such that 
$((s_n,t_n),(s_{n+1},t_n))\in\alpha$,
note that $\tau = s_0s_1s_3\cdots s_{n-1}\cdots s_{n+1}\in V^{n/2+1}$
is a trace of $\D_A$,
and define $\stag''(\tau)=\stag'(\rho\circ(s_{n+1},t_n))$. By the 
definition of $\G_\P$, it follows that 
$\tau'=t_0t_2\cdots t_{n}\stag''(\tau)$ is a trace in $\D_B$,
thus $\stag''$ is an executable strategy in $\P$.

To describe the correspondence between $\sigma'$ and $\sigma''$ we make the next claim, completely analogous to Claim~\ref{lem:bijection}.

\begin{claim}\label{lem:bijection2}
A sequence $\rho=\rho_0\cdots\rho_n\in(U\cup V)^+$ is a play of $\G_\P$
compatible with $\stag'$ iff 
there exist a trace $\tau^A=s_0\cdots s_n$ of $\D_A$
and a trace $\tau^B$ of $\D_B$ such that 
$\tau^B=\tilde\stag''(\tau^A)=t_0\cdots t_n$ and 
$\rho=(s_0,t_0)(s_1,t_0)\cdots(s_n,t_{n-1})(s_n,t_n)$.
\end{claim}

For a proof, given a trace $\tau^A=s_0\cdots s_n$ of $\D_A$, let 
$\tau^B=\tilde{\stag''}(\tau^A)=t_0\cdots t_n$ 
be the trace of $\D_B$ induced by $\stag''$ on $\tau^A$. 
By the definition of  $\stag''$ provided above, 
it follows that the sequence
$\rho=(s_0,t_0)(s_1,t_0)\cdots(s_n,t_{n-1})(s_n,t_n)$
is a play of $\G_\P$ compatible with $\stag'$. 
On the other hand, for a play 
$\rho=(s_0,t_0)\cdots(s_n,t_n)$ compatible with $\stag'$,
again by the definition of $\stag''$, we 
have that the sequences $\tau_A=s_0\cdots s_n$ 
and $\tau_B=t_0\cdots t_n$ are traces of, respectively 
$\D_A$ and $\D_B$, such that $\tau_B=\tilde\stag''(\tau_A)$.

Now to conclude Lemma~\ref{lem:solpairwise}, assume that $\stag'$ is a winning strategy for $P2$ in $\G_\P$, with winning objective $W=\safe(g)$. 
For a trace $\tau^A=s_0\cdots s_n$ of $\D_B$, let $\tau^B=\tilde{\stag''}(\tau^A)=t_0\cdots t_n$.
By Claim~\ref{lem:bijection2}, we have that the sequence 
$\rho=(s_0,t_0)(s_1,t_0)\cdots(s_n,t_{n-1})(s_n,t_n)$ is a play of $\G_\P$ 
compatible with $\stag'$.  Moreover, since $\stag'$ is winning, for $i=1,\ldots,2n$, 
$\rho_i\in g$. But then, for all pairs $(s,t)$ in $\rho$, we have that 
$(\lambda^A(s),\lambda^B(t))\models\phi'$, that is $\lambda(\tau^A, \tau^B)$ satisfies $\Box\Phi$. 
Since $\tau^A$ is arbitrary, it follows that $\stag''$ is a solution for $\P$, which completes the proof.
\end{proof}

Finally, the construction of the safety game $\G_\P$ together with Lemma~\ref{lem:solpairwise} gives us the following result.

\begin{theorem}
Solving MBSD for point-wise mapping specifications is in PTIME for combined complexity, mapping complexity and domain complexity.
\end{theorem}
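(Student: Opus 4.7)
The plan is to package the theorem as a straightforward consequence of Lemma~\ref{lem:solpairwise} together with a size analysis of the safety game $\G_\P$. By Lemma~\ref{lem:solpairwise}, solving an MBSD instance $\P$ reduces to deciding whether $P2$ has a winning strategy in the safety game $\G_\P=(\A,\safe(g))$. Since, as recalled at the end of Section~\ref{sec:2players}, safety games are solvable in time polynomial in $|\G_\P|$, it suffices to show that $\G_\P$ can be constructed in polynomial time in $|\D_A|$, $|\D_B|$ and $|\Phi|$, and that its size is polynomial in these parameters.

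Concretely, I would carry out three steps. First, bound the size of the arena $\A=(U,V,u_0,\alpha,\beta)$: by construction $U=V=S\times T$, so $|U|+|V|=2|S||T|$; the transitions satisfy $|\alpha|\leq |\delta^A|\cdot |T|$ and $|\beta|\leq |\delta^B|\cdot |S|$. Hence $|\A|$ is polynomial in $|\D_A|$ and $|\D_B|$, and independent of $|\Phi|$. Second, bound the cost of computing the safe set $g=\{(s,t)\in U\mid \lambda^A(s)\cup\lambda^B(t)\models \Phi'\}$, where $\Phi'=\bigwedge_{i=1}^k(\varphi_i\rightarrow\psi_i)$ is the Boolean formula obtained by dropping the leading $\Box$: for each of the $|S|\cdot|T|$ nodes, evaluating $\Phi'$ on the joint labeling is linear in $|\Phi|$, so $g$ is computed in time $O(|S|\cdot|T|\cdot|\Phi|)$. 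Third, combine these with the PTIME algorithm for safety games on $\G_\P$.

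Putting these together, the overall running time is polynomial in $|\D_A|$, $|\D_B|$ and $|\Phi|$, which yields PTIME combined complexity. Fixing the domains leaves a cost polynomial in $|\Phi|$ (through the evaluation of $\Phi'$ at the nodes and the unchanged game-solving step), giving PTIME mapping complexity; symmetrically, fixing $\Phi$ leaves a cost polynomial in $|\D_A|$ and $|\D_B|$, giving PTIME domain complexity.

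There is no real obstacle here: the non-trivial content was already carried by Lemma~\ref{lem:solpairwise}, which justified the reduction to a safety game, and by the classical PTIME result for safety games. The only thing to be careful about is making explicit that the construction of $\G_\P$, including the set $g$, introduces at most a polynomial overhead in each of the three complexity measures, so that the PTIME bound propagates uniformly to combined, mapping and domain complexity.
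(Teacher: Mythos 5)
Your proof is correct and follows essentially the same route as the paper's: reduce to the safety game $\G_\P$ via Lemma~\ref{lem:solpairwise}, observe that the arena is polynomial in $|\D_A|+|\D_B|$ and that the safe set $g$ costs $O(|\Phi'|)$ per node, and invoke the PTIME solvability of safety games. Your version just spells out the size bounds a bit more explicitly.
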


\begin{proof}
Given an MBSD instance $\P$, we construct the safety game $\G_\P$ as shown. Observe that the construction of $\G_\P$ requires 
constructing the game arena $\A$, which can be done in time polynomial in $|\D_A|+|\D_B|$,
and setting the set of states $g$, which takes at most time $\O(|\Phi'|)$ for each state in $\A$.
Finally by Lemma~\ref{lem:solpairwise} we have that $\P$ has a solution if and only if $\G_\P$ has a solution, where solving a safety game takes linear time in the size of $\G_\P$~\cite{Mar75}. 
\end{proof}

Observe that if $\D_A$ and $\D_B$ are represented compactly~(logarithmically) using, e.g., 
logical formulas or PDDL specifications~\cite{2019Haslum}, then the domain  (and hence the combined) complexity becomes EXPTIME, and mapping complexity remains PTIME.  Similar considerations hold also for the other cases that we analyze throughout the paper.

\section{Mimicking Behaviors with Target Mapping Specifications}\label{sec:targets}

We now explore mimicking specifications that are of \emph{target} nature. 
In this setting, $B$ has to mimic $A$ in such a way that whenever $A$ reaches a certain target,
so does $B$, although not necessarily at the same time step: 
$B$ is free to reach the required target at the same time, later, or even before $A$ does.
For this to be possible, $B$ must have the power to stop the game, which is what we assume here.
Formally, target mapping specifications are formulas of the following form,
where $\varphi_i$ and $\psi_i$ are Boolean properties over $\D_A$ and $\D_B$, respectively:
\[
	\varphi = \bigwedge_{i=1}^k  (\Diamond\varphi_i) \rightarrow (\Diamond\psi_i)
\]
As before, we first give an illustrative example that demonstrates the use of target mappings, 
then we explore algorithmic and complexity results.

\subsection{Target Mapping Specifications in Rubik's Cube}\label{sec:rubik}
Two agents, teacher $H$ and learner $L$ are provided with two 
Rubik's cubes of different sizes: $H$ has edge of size 4 whereas $L$ has 
one of  size 3.
$L$ wants to learn from $H$ the main steps to solve the cube;
to this end, $H$ shows $L$ how to reach certain \emph{milestone} 
configurations on the cube of size 4 and asks $L$ to replicate them on the cube of size 3,
even in a different order.
Milestones are simply combinations of solved faces, e.g., \emph{red and green}, 
\emph{white and blue and yellow}, or simply \emph{white}.
Obviously, $L$ cannot blindly replicate $H$'s moves, as the cubes are of different 
sizes and the actual sequences to solve the faces are different; thus, $L$
must find its way to reach the same milestones as $H$, possibly in a different 
order. When $L$ is tired, it can stop the learning process.

We model this scenario as an MBSD problem instance $\R=(\H,\L,\Phi,B)$,
where $\H$ and $\L$ model, respectively, $H$'s and $L$'s dynamic domain, i.e., 
the two cubes.
The two domains are conceptually analogous but,
modeling cubes of different sizes, they feature different sets of states and transitions, 
which correspond to cube configurations and possible moves, respectively.
We model such domains parametrically wrt the size $E$ of the edge.

Fix the cube in some position, name the faces as $U$(p), $D$(own), $L$(eft), 
$R$(ight), $F$(ront), $B$(ack), let $Fac=\{U,D,L,R,F,B\}$, and associate a pair 
of integer coordinates to each position in a face, so that every position is identified 
by a triple $(f,x,y)\in Pos = Fac\times\{0,\ldots,E-1\}^2$.
To model the color assigned to tile $(f,x,y)$, we use propositions of 
the form $c_{f,x,y}$, with 
$c\in Col=\{white, green, red, yellow, blue, orange\}$. Let $\Prop$ be the set of all 
such propositions.
Finally, index the horizontal and vertical ``slices'' of the cube from $0$ to $E-1$.

The (parametric) dynamic domain for a Rubik's cube with edge of size $E$
is the domain $\D(E)=(S,s_0,\delta,\lambda)$, where:
\begin{itemize}
	\item $S\subseteq 2^{Prop^E}$ is the set of all admissible (i.e., reachable) 
		cube's configurations; among other constraints, omitted for brevity, this requires 
		that, for every $s\in S$:
			\begin{itemize} 
				\item for every $(f,x,y)\in Pos$, there exists exactly one $c\in C$
					such that $c_{f,x,y}\in s$ (every position has exactly one color);
			\end{itemize}
	\item $s_0$ is an arbitrary state from $S$;
	\item $\delta$ allows a transition from $s$ to $s'$ iff $s'$ models a configuration 
		reachable from $s$ by a $90^\circ$ (clockwise or counter-clockwise) rotation  
		of one of its $2*E$ slices;
	\item $\lambda(s)=s$.
\end{itemize}

We then define $\H=\D(4)$ and $\L=\D(3)$. To distinguish the elements  of 
$\H$ from those of $\L$, we use a primed version in the latter, e.g., 
$Pos'$ for positions, $c'_{f,x,y}$ for propositions, and so on.

As said, $L$'s goal is to replicate the milestones shown by $H$.
For every face $f\in Fac$, we define formula $C_f=\bigwedge_{(f,x,y)\in Pos} c_{f,x,y}$
to express that the tiles of face $f$ have all the same color $c$.
For $\L$, we correspondingly have $C'_f=\bigwedge_{(f,x,y)\in Pos'} c'_{f,x,y}$.

We report below an example of target mappings:
\[
	\begin{array}{l}
		(\Diamond blue_R)\rightarrow (\Diamond blue'_R)\\
		(\Diamond (red_U\land white_L)) \rightarrow (\Diamond (red'_U\land white'_L))\\
		(\Diamond (red_U\land\lnot white_L)) \rightarrow (\Diamond (red'_U\land\lnot white'_L)).
	\end{array}
\]
Observe that $L$ has many ways to fulfill $H$'s requests: for instance, by reaching a 
configuration where $blue'_R\land red'_U\land white'_L$ holds, it has fulfilled 
the first and the second request, even if the configuration was reached before 
$H$ showed the milestones. Obviously, however, the last request cannot be 
fulfilled at the same time as the second one, as $white'_L$ clearly excludes $\lnot white'_L$,
thus an additional effort by $L$ is required to satisfy the specification.

\subsection{Solving MBSD with Target Mapping Specifications}\label{sec:targetsExp}

For target mappings as well, we reduce MBSD to strategy synthesis for a two-player game.
To this end, assume an MBSD instance $\P = (\D_A, \D_B, \Phi, B)$ with 
mapping specification $\Phi=\bigwedge_{i=1}^k  (\Diamond\varphi_i) \rightarrow (\Diamond\psi_i)$.
To solve $\P$, we must find a strategy $\stag$ such that for every infinite trace
$\tau^A_\infty=s_0 s_1\cdots$ of $\D_A$ and every conjunct 
$(\Diamond\varphi_i) \rightarrow (\Diamond\psi_i)$ of $\Phi$,
if there exists an index $j_i$ such that $\lambda^A(s_{j_i})\models \varphi_i$,
then there exist a finite prefix $\tau_A=s_0\cdots s_n$ of $\tau^A_\infty$ 
and an index $l_i$
such that, for $\stag(\tau)=t_0\cdots t_n$,
we have that $\lambda^B(t_{l_i})\models \psi_i$ 
(recall $\varphi_i$ and $\psi_i$ are Boolean formulae over $\Prop^A$ only and $\Prop^B$ only, respectively).
As per Definition~\ref{def:solution}, this is equivalent to requiring that
$\lambda(\tau^A,\tilde\stag(\tau^A))\models(\Diamond\varphi_i) \rightarrow (\Diamond\psi_i)$.

The challenge in constructing $\stag$ is that the index $l_i$ may be equal, smaller or larger then $j_i$. 
Thus $\sigma$ needs to record which $\varphi_i$ or $\psi_i$ were already met during the trace, up to the current point. 
Since the number of possible traces to the current state may be exponential, keeping count of all possible 
options may be expensive. We first discuss general domain structure, then in Section~\ref{sec:tree-like} we explore a very specific tree-like structure.

For general domains, 
there may exist many traces ending in a given state, and each such trace contains states 
that satisfy, in general, different sub-formulas $\varphi_i$ and $\psi_i$ occurring in the mappings.
Thus satisfaction of sub-formulas cannot be associated to states as done before, 
but must be associated to traces.
In fact, to check whether a target mapping is satisfied, 
it is enough to remember, for every $i=1,\ldots,k$,
whether $A$ has satisfied $\varphi_i$ and/or $B$ has satisfied $\psi_i$, along a trace.
This observation suggests to introduce a form of memory to record 
satisfaction of sub-formulas along traces. We do so by augmenting the game 
arena constructed in Section~\ref{sec:pointwise}.
In particular, we extend each node in the arena
with an array of bits of size $2k$ to keep track of which sub-formulas $\varphi_i$ and $\psi_i$ 
were satisfied, along the play that led to the node, 
by some of the domain states contained in the nodes of the play.

Formally, let $M=(\{0,1\}^2)^k$ and let $[cd]=((c_1,d_1),\ldots,(c_k,d_k))$
denote the generic element of $M$.
Given an MBSD instance $\P = (\D_A, \D_B, \Phi,B)$, 
where $\D_A = (S, s_0,\delta^A,\lambda^A)$ and $\D_B = (T,t_0,\delta^B,\lambda^B)$, 
we define the game arena $\A= (U, V, u_0, \alpha, \beta)$ as follows:
\begin{itemize}
  	\item 
  	$U= S \times T \times M$;
	\item 
	$V= S \times T \times M$;
	\item 
	$u_0=(s_0,t_0,[cd])$ such that, for every $i\leq k$, 
	$c_i=1$ iff $\lambda^A(s_0)\models\phi_i$ and 
	$d_i=1$ iff $\lambda^B(t_0)\models\psi_i$;
	\item 
	$((s,t,[cd]), (s',t,[c'd]))\in\alpha$ iff 
		$(s,s')\in \delta^A$, and  for $i=1,\ldots,k$,
	if $\lambda^A(s')\models\phi_i$ then $c'_i=1$,
	otherwise $c'_i=c_i$;    
	\item
	$((s,t,[cd]), (s,t',[cd']))\in\beta$ iff 
		$(t,t')\in \delta^B$, and  for $i=1,\ldots,k$,
	if $\lambda^B(t')\models\psi_i$ then $d'_i=1$,
	otherwise $d'_i=d_i$.
\end{itemize}

We then define the game structure $\G_\P=(\A,W)$, where
$W=\reach(g)$, with 
$g = \{ u \in U \mid u = (s, t, [cd]), \text{ where $[cd]$ is s.t. } c_i=0 \text{ or } d_i=1 \text{, for every } i=1,\ldots,k\}$.
Intuitively, $g$ is the set of all nodes reached by a play
such that if $\phi_i$ is satisfied in the play (by a state of $\D_A$
in some node of the play), then so is $\psi_i$, for $i=1,\ldots,k$
(by a state of $\D_B$ in some node of the play).
Thus, if a play contains a node from $g$ then the 
corresponding traces of $\D_A$ and $\D_B$,
combined, satisfy all the mapping's conjuncts.
As a consequence of this construction, we obtain the following result, the full proof of which is in line of Lemma~\ref{lem:solpairwise}.
\begin{lemma}\label{lem:interLemma}
There is a solution to $\P$ if and only if there is a winning strategy for the reachability game $\G_\P$.
\end{lemma}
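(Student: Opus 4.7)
The plan is to mirror the two-direction argument of Lemma~\ref{lem:solpairwise}, the only novelty being that the memory component $[cd]$ of each node must be threaded through both directions. The backbone is an analogue of Claim~\ref{lem:bijection}: plays of $\G_\P$ compatible with a fixed strategy are in one-to-one correspondence with pairs $(\tau^A,\tilde\stag(\tau^A))$, where the memory values along the play are uniquely determined by the definitions of $\alpha$ and $\beta$ (the $c_i$ become $1$ exactly when an $s_j$ has $\lambda^A(s_j)\models\varphi_i$, and symmetrically for the $d_i$). I would establish this bijection first, so that both directions of the lemma reduce to a trace/play translation.

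For the forward direction, I would take an executable strategy $\stag$ solving $\P$ and define $\stag'$ on an extended play $\rho\circ(s_{n+1},t_n,[cd])$ by setting $\stag'(\rho\circ(s_{n+1},t_n,[cd]))=(s_{n+1},\stag(s_0 s_1 s_3\cdots s_{n-1} s_{n+1}),[c'd'])$, with $[c'd']$ being the forced update from $\beta$. Executability of $\stag$ makes $\stag'$ well-defined. To show $\stag'$ is winning for $\reach(g)$, fix an infinite trace $\tau^A_\infty=s_0 s_1\cdots$ of $\D_A$; by Definition~\ref{def:solution}, case~\ref{sol:2}, there is a finite prefix $\tau^A$ such that $\lambda(\tau^A,\tilde\stag(\tau^A))\models\Phi$. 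For each conjunct $\Diamond\varphi_i\rightarrow\Diamond\psi_i$ this means either no state of $\tau^A$ satisfies $\varphi_i$ (so the terminal $c_i=0$) or some state of $\tilde\stag(\tau^A)$ satisfies $\psi_i$ (so the terminal $d_i=1$). The corresponding play reaches $g$, so $\stag'$ wins.

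For the converse, given a winning $\stag'$ in $\G_\P$, I would define $\stag$ on traces of $\D_A$ by projecting $\stag'$ onto the $T$-component exactly as in Lemma~\ref{lem:solpairwise}. The bijection yields that for every infinite trace $\tau^A_\infty$ of $\D_A$ the induced infinite play is compatible with $\stag'$, and winning of $\stag'$ forces some finite prefix of that play to land in $g$; monotonicity of the memory ensures the corresponding pair $(\tau^A,\tilde\stag(\tau^A))$ already satisfies every conjunct of $\Phi$, verifying case~\ref{sol:2} of Definition~\ref{def:solution}.

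The main obstacle I anticipate is the asymmetry between finite plays of $\G_\P$ and infinite traces of $\D_A$: the soundness of $g$ as a reachability target rests on the fact that the memory is monotone and that $g$ is defined to capture exactly the local, per-conjunct certificate for $\Phi$. Making this rigorous amounts to verifying that, along any play, $c_i=1$ iff some prefix-state of $\D_A$ has witnessed $\varphi_i$, and similarly for $d_i$; this is a straightforward induction on play length that I would spell out once and then cite in both directions, so that the rest of the argument proceeds in parallel with the proof of Lemma~\ref{lem:solpairwise}.
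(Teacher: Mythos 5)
Your proposal is correct and follows exactly the route the paper intends: the paper gives no separate argument for this lemma, stating only that the proof is ``in line of'' Lemma~\ref{lem:solpairwise}, and your adaptation---the bijection claim between plays and trace pairs, the inductive characterization that $c_i=1$ (resp.\ $d_i=1$) iff some prefix state has witnessed $\varphi_i$ (resp.\ $\psi_i$), and the unpacking of membership in $g$ as the per-conjunct certificate for $\Phi$ under Definition~\ref{def:solution}, case~\ref{sol:2}---is precisely the intended fleshing-out. No gaps; you correctly identify the one point that genuinely needs care, namely matching finite plays reaching $g$ against infinite traces of $\D_A$ via the monotone memory.
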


Then, Lemma~\ref{lem:interLemma} gives us the following.

\begin{theorem}\label{thm:targetsalg}
MBSD with target mapping specifications can be solved in time polynomial in 
$|\D_A\times \D_B|\times |\Phi|\times 4^k$, with $\Phi$ the mapping specification and 
$k$ the number of its conjuncts.
\end{theorem}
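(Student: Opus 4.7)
The plan is to prove the theorem by bounding the size of the reachability game $\G_\P$ constructed just above the theorem statement and invoking Lemma~\ref{lem:interLemma} together with the known complexity of solving reachability games. Since by Lemma~\ref{lem:interLemma} a solution to $\P$ exists iff $P2$ has a winning strategy in $\G_\P$, it suffices to build $\G_\P$ and solve the reachability game within the stated time bound.

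First, I would bound the number of nodes in $\A$. By construction, $U = V = S \times T \times M$ with $|M| = (\{0,1\}^2)^k = 4^k$, so the number of nodes is $O(|S| \cdot |T| \cdot 4^k) = O(|\D_A \times \D_B| \cdot 4^k)$. The transition relations $\alpha$ and $\beta$ are induced, respectively, by $\delta^A$ and $\delta^B$ augmented with the bookkeeping update on $[cd]$, so their sizes are also bounded by $O(|\D_A \times \D_B| \cdot 4^k)$. For each candidate transition, deciding whether the updated memory component is correct requires, for $i=1,\ldots,k$, checking whether the new domain state satisfies $\varphi_i$ or $\psi_i$; this takes time $O(|\Phi|)$ per transition and per node (including computing $u_0$ and deciding membership in $g$). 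Putting these together, the entire arena $\A$ and the target set $g$ can be constructed in time polynomial in $|\D_A \times \D_B| \cdot |\Phi| \cdot 4^k$.

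Second, I would appeal to the classical result, stated in Section~\ref{sec:2players}, that reachability games are solvable in time linear in the size of $\G$. Therefore, once $\G_\P$ has been built, solving it takes time linear in its size, which is again bounded by $O(|\D_A \times \D_B| \cdot |\Phi| \cdot 4^k)$. Combining the construction phase and the game-solving phase yields overall time polynomial in $|\D_A\times \D_B|\times |\Phi|\times 4^k$, and by Lemma~\ref{lem:interLemma} this is precisely the complexity of solving MBSD under target mapping specifications.

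The steps here are essentially bookkeeping: the only mildly delicate point is making explicit that updating the memory $[cd]$ along each transition requires $O(|\Phi|)$ work and that the characterization of $g$ likewise depends only on the memory component, so that the $4^k$ blowup is isolated to the memory dimension and does not interact multiplicatively with $|\Phi|$ in a harder-to-control way. I do not expect any substantive obstacle beyond this careful accounting.
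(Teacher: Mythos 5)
Your proposal is correct and follows essentially the same route as the paper: bound the size of the reachability game $\G_\P$ by $|\D_A\times\D_B|\times 4^k$, account for the $O(|\Phi|)$ cost of evaluating the $\varphi_i,\psi_i$ when building transitions and $g$, and conclude via Lemma~\ref{lem:interLemma} and linear-time solvability of reachability games. Your write-up is in fact somewhat more explicit about where the $|\Phi|$ factor enters than the paper's own proof.
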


\begin{proof}
Given an MBSD instance $\P$ with target mapping specifications, we construct a reachability game $\G_\P$ as shown above, which has size $|\D_A\times \D_B|\times 4^k$ and construction time polynomial in $|\D_A\times \D_B|\times |\Phi|\times 4^k$. Result then follows from Lemma~\ref{lem:interLemma} and from the fact that reachability games can be solved in linear time in the size of the game.
\end{proof}

An immediate consequence of Theorem~\ref{thm:targetsalg} is that, for mappings of fixed size, 
the domain-complexity of the problem is in PTIME.
For combined complexity, note that the memory-keeping approach adopted in $\G_\P$ is of a monotonic nature, i.e.,
once set, the bits corresponding to the satisfaction of $\psi_i$ and $\phi_i$ 
cannot be unset. We use this insight to tighten our result and show that the presented construction 
can be in fact carried out in PSPACE.

\begin{theorem}\label{thm:targetPSPACE}
MBSD for target mapping specifications is in PSPACE for combined complexity and mapping complexity, 
and in PTIME for domain complexity. 
\end{theorem}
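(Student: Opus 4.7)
The plan is to establish the $\PSPACE$ bound via an alternating polynomial-time simulation of $\G_\P$, building on the monotonicity of the memory component that the paragraph preceding the theorem already highlights. The $\PTIME$ claim for domain complexity is immediate from Theorem~\ref{thm:targetsalg}: when $\Phi$, and hence $k$, is considered fixed, the $4^k$ factor becomes a constant and the bound collapses to a polynomial in $|\D_A\times\D_B|$.

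First I would formalise monotonicity: inspecting the definitions of $\alpha$ and $\beta$ in Section~\ref{sec:targetsExp}, every transition in $\G_\P$ can only flip bits of $[cd]$ from $0$ to $1$. Consequently, along any play $[cd]$ assumes at most $2k+1$ distinct values and changes at most $2k$ times in total.

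The crux is a polynomial bound on the length of a shortest winning play. I would prove: if $P2$ wins $\G_\P$, then $P2$ has a memoryless winning strategy $\stag$ such that every play consistent with $\stag$ reaches $g$ within $N := (2k+1)\cdot|S|\cdot|T|$ steps. The argument partitions any such play into maximal \emph{memory phases}, i.e., maximal sub-sequences over which $[cd]$ stays constant. Within a phase, memorylessness of $\stag$ together with constancy of $[cd]$ means the evolution depends only on the current pair $(s,t) \in S\times T$; hence a phase of length exceeding $|S|\cdot|T|$ must revisit some $(s,t)$ and cycle while leaving $[cd]$ unchanged and without reaching $g$, contradicting winningness of $\stag$. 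Combining at most $2k+1$ phases, each of length at most $|S|\cdot|T|$, yields the bound $N$.

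With the length bound in hand the alternating algorithm is direct: maintain the current state $(s,t,[cd])$, which is representable in $O(\log|S|+\log|T|+k)$ bits, and a binary counter $n \le N$. At each step, accept if $[cd]$ satisfies the condition defining $g$; reject if $n > N$; otherwise universally guess $s' \in \delta^A(s)$ with the forced update of the $c$-bits per $\alpha$, then existentially guess $t' \in \delta^B(t)$ with the forced update of the $d$-bits per $\beta$, and iterate. Each step uses polynomial time and space and the counter caps the alternating runtime by a polynomial; correctness follows from Lemma~\ref{lem:interLemma} combined with the length bound. Invoking the classical identity between alternating polynomial time and $\PSPACE$ then yields $\PSPACE$ for both combined and mapping complexity. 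The main obstacle I anticipate is precisely the length bound: without exploiting monotonicity one only has the naive exponential bound $|S|\cdot|T|\cdot 4^k$, which via alternation would give only $\EXPTIME$; monotonicity is exactly what compresses the effective horizon to polynomial size and makes $\PSPACE$ attainable.
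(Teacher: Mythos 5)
Your proposal is correct and follows essentially the same route as the paper: both exploit the monotonicity of the $[cd]$ memory together with memorylessness to bound the length of relevant plays by roughly $2k\cdot|S|\cdot|T|$, and then decide the depth-bounded game in polynomial space (your alternating polynomial-time simulation is the same mechanism as the paper's depth-bounded Min-Max search), with the \PTIME domain-complexity claim obtained identically from Theorem~\ref{thm:targetsalg} by fixing $k$. Your phase-decomposition argument for the length bound is in fact spelled out a little more carefully than the paper's version, but it is not a different proof.
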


\begin{proof}

Having shown PTIME membership for domain-complexity in Theorem~\ref{thm:targetsalg}, it remains to show 
membership in PSPACE for combined-complexity.
Assume that $P2$ wins the game $\G_\P$ and let $\sigma_\P$ be a memory-less winning strategy for $P2$.
First see that every play $\rho$ in $\sigma_\P$ is finite. Therefore, since $\sigma_\P$ is memory-less then every play $\rho$ in $\sigma_\P$ does not hold two identical $V$ nodes. That means that wlog in every play, the $[cd]$ index in every game node changes after at most $2\times|\D_A\times \D_B|$ steps (since there are two copies in $\G_\P$ of the domains product- for $P1$ and $P2$).
Next, we use the monotonicty property in $\G_\P$. Specifically, between every two consecutive game nodes $\rho_i=(s,t,[cd])$, $\rho_{i+1}=(s',t',[c'd'])$ for some $i$ in $\rho$, every index in $[cd]$ can only remain as is or change from $0$ to $1$, therefore the bit index changes at most $2k$ times throughout the play.

Thus, we reduce $\G_\P$ to an identical game $\G'_\P$ that terminates either when reaching an accepting state (then $P2$ wins), or after $2 \times |\D_A\times \D_B|\times 2k$ moves (then $P1$ wins).
Standard Min-Max algorithms (e.g.~\cite{RN2020}) that work in space size polynomial to maximal strategy depth can be deployed to verify a winning strategy for $P2$ in $\G'_\P$. Then on one hand if there is a winning strategy for $\G'_\P$ then there is a winning strategy for $\G_\P$ (the same strategy). On the other hand, if there is a winning strategy for $\G_\P$ then there is a memory-less winning strategy for $\G_\P$ that terminates after at most $2 \times |\D_A\times \D_B|\times 2k$ moves, which means that there is a winning strategy for $P2$ in $\G'_\P$.
\end{proof}

We continue our analysis of the case of MBSD target mapping specifications
by exploring whether memory-keeping is avoidable and a more effective solution approach can be found. As the following result implies, this is, most likely, not the case.

\begin{theorem}\label{thm:PSPACEhard}
MBSD for target mapping specifications is PSPACE-hard in combined complexity 
(even for $\D_A$, $\D_B$ as simple DAGs).
\end{theorem}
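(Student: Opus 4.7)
The plan is to reduce from TQBF, recalled to be PSPACE-complete in the preliminaries. Given a QBF $\Psi=\forall x_1\exists x_2\cdots\forall x_{2n-1}\exists x_{2n}\,\phi(\vec{x})$ with $\phi$ in CNF, I will construct in polynomial time an MBSD instance $\P_\Psi=(\D_A,\D_B,\Phi,B)$ in which $\D_A$ and $\D_B$ are layered DAGs of depth $O(n)$ (with a self-loop at a terminal sink to meet seriality), and argue that $\Psi$ is true iff $\P_\Psi$ is solvable.

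Both domains are organized in one layer per game round. In $\D_A$, at each odd round $i$, agent $A$ picks a value for the universal variable $x_i$ by moving to one of two successors labeled $a_i^T$ or $a_i^F$; at each even round, $A$ performs a forced ``wait'' transition, with the two branches of the preceding odd round merging into a single wait-state, so $|S|=O(n)$. Dually, in $\D_B$ at each odd round $i$, agent $B$ \emph{echoes} $A$'s latest choice by moving to a state labeled $e_i^T$ or $e_i^F$, while at each even round $B$ selects a value for the existential variable $x_i$, moving to a state labeled $b_i^T$ or $b_i^F$. Because any $B$-strategy $\sigma(s_0\cdots s_i)$ sees $s_i$ before picking $t_i$, faithful echoing at each odd round is always feasible, and the QBF alternation is mirrored: at round $2j$, the input $s_0\cdots s_{2j}$ encodes exactly $x_1,x_3,\dots,x_{2j-1}$.

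The mapping $\Phi$ is a conjunction of two families of target conjuncts. The \emph{echo conjuncts}, one per odd $i$ and $v\in\{T,F\}$, are $\Diamond a_i^v\rightarrow\Diamond e_i^v$; since the proposition $e_i^v$ labels only $B$'s round-$i$ state for echo $v$, these force $B$ to faithfully track $A$'s actual choices. The \emph{clause conjuncts}, one per clause $C$ of $\phi$, are $\Diamond p_0\rightarrow\Diamond\chi_C$, where $p_0\in\Prop^A$ labels only $s_0$ (so the premise always fires) and $\chi_C$ is the disjunction over literals $\ell\in C$ of the $B$-proposition witnessing $\ell$'s truth: $e_i^T$ for $\ell=x_i$ with odd $i$, $e_i^F$ for $\ell=\lnot x_i$ with odd $i$, and analogously $b_i^T$, $b_i^F$ for even $i$. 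Every conjunct fits the target-mapping template, with premise a Boolean formula over $\Prop^A$ and consequent a Boolean formula over $\Prop^B$.

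The main equivalence to verify is that $\P_\Psi$ is solvable iff $\Psi$ is true. Forward: Skolem functions witnessing $\Psi$ induce a $B$-strategy that echoes at odd rounds and plays the Skolem value at even rounds; the echo conjuncts hold by construction and each clause has a true literal whose $B$-label appears on the trace, so each clause conjunct holds on a prefix of length at most $2n+1$. Conversely, any winning $\sigma$ must satisfy every echo conjunct (pinning $B$'s odd-round label to $A$'s choice) and every clause conjunct (so in every play some $B$-label of each $\chi_C$ appears, which under correct echoing means the joint assignment satisfies $\phi$), and the even-round choices of $\sigma$, depending only on $x_1,x_3,\dots,x_{2j-1}$, yield Skolem witnesses of exactly the right shape for $\Psi$. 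All components are polynomial in $|\Psi|$, establishing PSPACE-hardness on simple DAGs. The main obstacle is that target-mapping conjuncts split premise and consequent across $\Prop^A$ and $\Prop^B$, whereas clause satisfaction is inherently joint; the echo gadget resolves this by transporting $A$'s assignment into $B$-side labels, reducing each clause check to a single consequent $\Diamond\chi_C$ over $\Prop^B$.
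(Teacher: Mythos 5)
Your reduction is correct, but it takes a genuinely different route from the paper's. The paper reduces from TQBF-CNF-1 (CNF with at most one universal literal per clause, shown PSPACE-complete in a separate proposition): there each clause $l_{x^A}\vee C_B$ is rewritten as $\Diamond(\text{complement of }l_{x^A})\rightarrow\Diamond(C_B^{\Prop})$, so the single universal literal lands in the premise over $\Prop^A$ and the existential remainder in the consequent over $\Prop^B$, and a separate stopping constraint $\Diamond(p^A_*)\rightarrow\Diamond(p^B_*)$ prevents $B$ from stopping before the end. You instead reduce from unrestricted TQBF by introducing an echo gadget: $\D_B$ carries extra states that mirror $A$'s universal choices, echo conjuncts $\Diamond a_i^v\rightarrow\Diamond e_i^v$ pin those echoes to $A$'s actual moves, and then every clause --- no matter how many universal literals it contains --- is encoded entirely in the consequent $\Diamond\chi_C$ over $\Prop^B$ with a trivially firing premise. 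Your approach buys a direct reduction from standard TQBF, eliminating the normal-form lemma, at the cost of a slightly larger $\D_B$ and $O(n)$ extra conjuncts; the paper's approach keeps $\D_B$ minimal but needs the CNF-1 preprocessing. Your handling of the stop agent is also slightly different (your clause premises fire at $s_0$ rather than at the sink), but your converse argument still goes through: if $B$ stops early, every clause is already witnessed by a literal whose truth value is fixed, so the partial assignment extends arbitrarily, and the even-round choices of $\sigma$ have exactly the Skolem dependencies needed. Both proofs establish the claim for polynomial-size layered DAGs.
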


\paragraph{Proof Outline.}
We give a proof sketch, see Section~\ref{sec:PSPCAEhardProof} below for the detailed proof.

A \textit{QBF-CNF-1} formula is a QBF formula in a CNF form in which every clause contains at most one universal variable. The language TQBF-CNF-1, of all \emph{true} QBF-CNF-1 formulas, is also PSPACE-complete. See Proposition~\ref{thm:TQBFCNF1} below for completion. We show a polynomial time reduction to MBSD from a TQBF-CNF-1.

Given a QBF-CNF-1 formula $F$, assume wlog that each alternation holds exactly a single variable.
Construct the following MBSD instance $\P_F$.
Intuitively, the domains $\D_A$ and $\D_B$ are directed acyclic graphs (DAG) where $\D_A$ controls the universal variables and $\D_B$ controls the existential variables, see Figure~\ref{fig:QBFreduction} for a rough sketch of the domains graph for a QBF Formula with universal variables $x^A_1,x^A_2$ and existential variables $x^B_1,x^B_2$. The initial states are $s^A_1$ for agent A and $s^B_1$ for agent B.
By traversing the domains in alternation, each agent can choose at every junction node depicted as $s^A_i$ for $D_A$ or $s^B_i$ for $D_B$, between either a \emph{true path} through $\top$ depicted nodes, or  \emph{false path} through $\bot$ depicted nodes, thus correspond to setting assignments to propositions that are analogue to universal (agent A) or existential (agent B) variables. 
For example, by visiting $s^A_{1_\top}$, agent $A$ satisfies a proposition called
$p^A_{1_\top}$ 
that corresponds to assign the universal variable $x^A_1=true$.
The mapping $\Phi$ is set according to $F$ where each clause corresponds to a specific conjunct. 
For example a clause $(x^A_1 \vee x^B_2)$ becomes a conjunct $\Diamond(p^A_{1_\bot}) \rightarrow \Diamond(p^B_{2_\top})$ of $\Phi$, where $p^A_{1_\bot},p^B_{2_\top}$  are propositions in $\Prop^A$ and $\Prop^B$ respectively.
An additional conjunct is added to ensure that Agent B does not stop ahead of time.
Then a strategy for agent B of which path to choose at every junction node corresponds to a strategy of which existential variable to assign for $F$. As such, $F$ is \emph{true} if and only if there is a solution to the MBSD $\P_F$. \qed

\subsubsection{Detailed proof of Theorem~\ref{thm:PSPACEhard}}\label{sec:PSPCAEhardProof}

We first provide a detailed proof of Theorem~\ref{thm:PSPACEhard}. Then for completness we prove that the language TQBF-CNF-1, used in the proof, is PSPACE-complete.

Given a QBF-CNF-1 formula $F$ with $n$ universal variables $x^A_1,\cdots x^A_n$  and $n$ existential variables $x^B_1,\cdots x^B_n$, assume  wlog that each alternation holds exactly a single variable.
Construct the following MBSD instance $\P_F$.
Intuitively for $H\in\{A,B\}$, the separate $\D_H$ domains are DAGs, each composed of $n+1$ \textit{major states} $s^H_1,s^H_2,\cdots s^H_{n+1}$ respectively. Let $Prop^H=\{p^H_{1_\top},p^H_{1_\bot},\cdots p^H_{n_\top},p^H_{n_\bot}, p^H_{*}\}$.
From $s^H_i$, for $1\leq i\leq n$, Agent H can move only to $s^H_{i+1}$ through exactly one of the following paths: a directed \textit{true} path that visits a vertex $s^H_{i_\top}$ labeled by $\{p^H_{i_\top}\}$ or a directed \textit{false} path that visits a vertex $s^H_{i_\bot}$ labeled by $\{p^H_{i_\bot}\}$. From $s^H_{n+1}$ there is only directed self-loop.
Thus the choice of which path to take means whether the subformula $\Diamond(p^H_{i_\top})$ 
is satisfied (that corresponds to setting $x_i=true$), or $\Diamond(p^H_{i_\bot})$ isn't satisfied (that corresponds to setting $x_i=false)$.
Finally label $s^A_{n+1}$ with $\{p^A_*\}$ and $s^B_{n+1}$ with $\{p^B_*\}$. 
Then $\Diamond(p^H_*)$ is \emph{true} in every game played.

For the mapping specification $\Phi$, note that every clause $C$ at $F$ is of the form $(l_{x^A}\vee C_B)$ or $(C_B)$, where $l_{x^A}$ is a literal of a universal variable (\emph{universal literal}) and ($C_B$ is a disjunction of literals of existential constraint (\emph{existential literals}). For every such $C_B$ define $C_B^{\Prop}$ to be a disjunction of propositions from $\Prop^B$ in which every \emph{negated} (resp.~\emph{un-negated}) literal $l_{x^B_i}$ is replaced with $p^B_{i_\bot}$ (resp. $p^B_{i_\top}$).
Next, for every clause $C$ of $F$, add to $\Phi$ a conjunct $\nu_C$ as follows. If $C$ is of the form $(l_{x^A_i}\vee C_B)$, set $\nu_C=(\Diamond(p^A_{i_\bot})\rightarrow \Diamond(C_B^{\Prop}))$ if $l_{x^A_i}$ is un-negated, and $\nu_C=(\Diamond(p^A_{i_\top})\rightarrow \Diamond(C_B^{\Prop}))$ if $l_{x^A_i}$ is negated (note that the negation has switched for $p^A$). If $C$ is of the form $(C_B)$, set $\nu_C=(\Diamond(p^A_*)\rightarrow \Diamond(C_B^{\Prop}))$.
Since a clause $(x^A\vee C_B)$ is logically equivalent to $(\neg x^A\rightarrow C_B^{\Prop})$ and the clause $(C_B)$ is logically equivalent to $(\top \rightarrow C_B^{Prop})$,
the construction of $\Phi$ mirrors a clause $C$ with its corresponding conjunct $\nu_C$.
To complete $\Phi$, add a final conjunct $(\Diamond(p^A_*)\rightarrow \Diamond(p^B_*))$ called the \textit{stopping-constraint}. Note that the stopping-constraint is \emph{true} only when both agents reach $s^H_{n+1}$. Thus, the role of the stopping-constraint is to ensure that agent B does not stop the game before reaching its end. Finally set $Ag_{stop}=B$ to finish the construction of 
$\P$ as an MBSD problem with target mapping spcification as required.

We give an example of the construction. let $F$ be the QBF input as follows.

\begin{align*}
   F =\forall x^A_1\exists x^B_1\forall x^A_2\exists x^B_2( &(x^A_1 \vee x^B_1 \vee x^B_2) \\
   & \wedge (\neg x^A_2 \vee \neg x^B_1) \\
   & \wedge (x^B_1\vee \neg x^B_2))
\end{align*}
Then the MBSD $\P_F$ is constructed as follows. The domains $\D_A, \D_B$ are in Figure~\ref{fig:QBFreduction} where $s_1^A,s_1^B$ are the initial state for agents $A,B$ respectively.
$s^A_3$ has a proposition $\{p^A_*\}$ and $s^B_3$ has a proposition $\{p^B_*\}$. For every $H\in\{A,B\}$ and $i\in\{1,2\} $ every node $s^H_{i_\top}$ has a proposition $p^H_{i_\top}$ and every node $s^H_{i_\bot}$ has a proposition $p^H_{i_\bot}$. The stop agent $Ag_{stop}$ is set to  $B$.
The mapping specification is as follows:
\begin{align*}
\Phi = & (\Diamond (p^A_{1_\bot})\rightarrow \Diamond (p^B_{1_\top} \vee p^B_{2_\top}))  \\
& \wedge (\Diamond (p^A_{2_\top})\rightarrow \Diamond (p^B_{1_\bot})) \\
& \wedge (\Diamond(p^A_*)\rightarrow \Diamond (p^B_{1_\top} \vee p^B_{2_\bot}))\\
& \wedge (\Diamond(p^A_*)\rightarrow \Diamond(p^B_*)) 
\end{align*}

%!TEX root = main.tex

\tikzset{
        ->, >=stealth, node distance=1.5cm and .1cm, every state/.style={thick, minimum size = 0pt}, 
	initial text=$ $,
}

\begin{figure}
	\begin{center}
		\begin{tikzpicture}
			    \node (s0) {$\D_A$};
			    \node[below of = s0, yshift=1cm] (s1) {$s^A_1$}; %%{$<put-state-name-here>$}s
			    \node[below of = s1, left = of s1] (s2) {$s^A_{1_\bot}$ };
			    \node[below of = s1, right = of s1] (s3) {$s^A_{1_\top}$ };
			    \node[below of = s2, right = of s2] (s4) {$s^A_2$};
			    \node[below of = s4, left = of s4] (s5) {$s^A_{2_\bot}$};
			    \node[below of = s4, right = of s4] (s6) {$s^A_{2_\top}$};
			    \node[below of = s5, right = of s5] (s7) {$s^A_3$};
			    
			    \node (t0) [xshift = 3cm]{$\D_B$};
			    \node (t1)[below of = t0, yshift = 1cm] {$s^B_1$};
			    \node[below of = t1, left = of t1] (t2) {$s^B_{1_\bot}$};
			    \node[below of = t1, right = of t1] (t3) {$s^B_{1_\top}$};
			    \node[below of = t2, right = of t2] (t4) {$s^B_2$};
			    \node[below of = t4, left = of t4] (t5) {$s^B_{1_\bot}$};
			    \node[below of = t4, right = of t4] (t6) {$s^B_{1_\top}$};
			    \node[below of = t5, right = of t5] (t7) {$s^B_3$};
		    \draw
			(s1) edge[bend right]  (s2)
			(s1) edge[bend left]  (s3)
			(s2) edge[bend right]  (s4)
			(s3) edge[bend left]  (s4)
			(s4) edge[bend right]  (s5)
			(s4) edge[bend left]  (s6)
			(s5) edge[bend right]  (s7)
			(s6) edge[bend left]  (s7)
			(s7) edge[loop below] (s7)
			
			(t1) edge[bend right]  (t2)
			(t1) edge[bend left]  (t3)
			(t2) edge[bend right]  (t4)
			(t3) edge[bend left]  (t4)
			(t4) edge[bend right]  (t5)
			(t4) edge[bend left]  (t6)
			(t5) edge[bend right]  (t7)
			(t6) edge[bend left]  (t7)
			(t7) edge[loop below] (t7);

		\end{tikzpicture}
	\end{center}
\caption{A rough sketch of the domains in the reduction construction in Theorem~\ref{thm:PSPACEhard}. The initial state for agent A is $s^A_1$ and for agent B is $s^B_1$.\label{fig:QBFreduction}}
\end{figure}
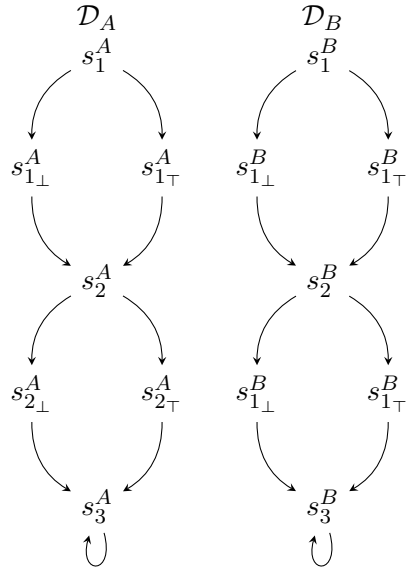

Back to the proof, obviously the construction of $\P$ is time-polynomial wrt $|F|$.
Note that while the agents move in $\D_A, \D_B$, the only choices that the agent has are at every $s^H_i$, to decide whether to move through the \emph{true-path} or the \emph{false-path}. Also note that both agents always progress at the same pace. That is: agent $A$ is in $s^A_i$ iff agent $B$ is in $s^B_i$. Also note that in every path that the agents take their respective domains, exactly one of $s^H_{i_\top}$ or $s^H_{i_\bot}$ can be visited, thus at every trace formed either $p^H_{i_\top}$ or $p^H_{i_\false}$ are satisfied but not both. That means that $\Diamond(p^H_{i_\top})\leftrightarrow\Diamond(p^H_{i_\bot})$ is always \emph{true}. 

Now assume that $F$ is \emph{true}. Therefore there is a strategy $\stag_F$ for the existential player that sets $F$ to be \emph{true}. Then we construct the following strategy $\stag_\P$ for agent $B$: whenever agent $A$ is at $s^A_i$ and takes the \emph{true-path} and thus satisfies $p^A_{i_\top}$ (resp. \emph{false-path to satisfy $p^A_{i_\bot}$}, assign $x^A_i=true$ (resp. $x^A_i=false$) in $\stag_F$. If the result is  $x^B_i=true$ (resp. $x^B_i=false$) then set agent $B$ to take the \emph{true-path} and thus satisfy $p^B_{i_\top}$ (resp. \emph{false-path to satisfy $p^B_{i_\bot}$}.
Due to the mirroring between $\Phi$ and $F$, it follows that when both agents reach $s^H_{n+1}$ (and therefore the stopping-constraint is \emph{true}), we have that every clause $C$ in $F$ is \emph{true} and thus so is its corresponding conjunct $\nu_C$ (recall that the subformula $\Diamond(p^A_*)$ is always \emph{true}).

Next, assume that there is a winning strategy $\stag_\P$ for $\P$.
Then similarly we construct a strategy $\stag_F$ as follows. At every point $s^B_i$, whenever agent $B$ takes the \emph{true-path} (resp. \emph{false-path}) set $x^B_i=true$ (resp. $x^B_i=false$). Following $\stag_\P$ ensured all the conjuncts of $\Phi$ are \emph{true}. Note that since the stopping-constraint is satisfied, Agent $B$ reaches $s^B_{n+1}$ which guarantees that $\stag_F$ is well defined for all variables. In addition, every clause $C$ corresponding to a conjunct $\nu_C$ must also be \emph{true}. For example, if $\Diamond(p^A_*)\rightarrow \Diamond(C_B^\Prop)$ is \emph{true} then since 
$\Diamond(p^A_*)$ is always \emph{true}, it means that $\Diamond(C_B^\Prop)$ must be \emph{true}, which means that a proposition in $C_B^\Prop$ is satisfied which means that a variable in $C_B$ is set \emph{true} in 
$\stag_F$, hence $C_B$ is \emph{true}.
That completes the proof. $\qed$

\vspace{2mm}

The PSPACE-hardness of TQBF-CNF-1 is not a hard exercise, for completion we bring a full proof.

\begin{proposition}\label{thm:TQBFCNF1}
TQBF-CNF-1 is PSPACE-complete.
\end{proposition}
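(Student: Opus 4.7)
The plan is to establish PSPACE-completeness by arguing membership and hardness separately.

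Membership in PSPACE is immediate: every QBF-CNF-1 instance is in particular a QBF, so TQBF-CNF-1 $\subseteq$ TQBF, and the latter is a classical PSPACE problem.

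For hardness, I would reduce in polynomial time from the standard PSPACE-complete problem TQBF-CNF (truth of QBF whose matrix is in CNF, with no restriction on the number of universal literals per clause). Given an input $F = Q_1 x_1 \cdots Q_n x_n\, \phi$ with $\phi$ in CNF, the goal is to rewrite $\phi$ into an equivalent CNF matrix in which no clause contains more than one universal literal. The naive idea of introducing a fresh existential $y$ to summarise two universal literals via $y \leftrightarrow u_1 \vee u_2$ fails, because the direction $(u_1 \vee u_2) \rightarrow y$ produces a clause with two universal literals; overcoming this is the main obstacle.

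The trick I would use is to introduce, for every universal variable $x_i$, two fresh existential ``copy'' variables $z_i^{+}$ and $z_i^{-}$, placed in the quantifier prefix immediately after $\forall x_i$. I then add the four binary clauses
\[
(\neg z_i^{+} \vee x_i),\quad (z_i^{+} \vee \neg x_i),\quad (\neg z_i^{-} \vee \neg x_i),\quad (z_i^{-} \vee x_i),
\]
each containing exactly one universal literal; together they encode $z_i^{+} \leftrightarrow x_i$ and $z_i^{-} \leftrightarrow \neg x_i$, and since the $z_i^{\pm}$ are chosen after $x_i$ has been assigned, they force the existential player to set $z_i^{+} = x_i$ and $z_i^{-} = \neg x_i$ (any other choice falsifies one of the four clauses). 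Inside every original clause of $\phi$, I then replace each literal $x_i$ by $z_i^{+}$ and each literal $\neg x_i$ by $z_i^{-}$, obtaining clauses whose universal literals have been entirely eliminated.

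The resulting formula $F'$ is in CNF with at most one universal literal per clause, and because the copy variables are quantified strictly after their corresponding universals and are forced to mirror them, $F'$ is true iff $F$ is true. The construction adds only $O(n)$ new variables and $O(n)$ new binary clauses on top of the modified matrix, so the reduction runs in polynomial time, yielding PSPACE-hardness and completing the proof.
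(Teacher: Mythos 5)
Your proof is correct and follows essentially the same route as the paper's: reduce from TQBF-CNF by introducing existential shadow variables quantified after each universal $x_i$, forcing them to mirror $x_i$ via binary clauses that each contain exactly one universal literal, and substituting the shadows into the original clauses. The only (cosmetic) difference is that you use two copies $z_i^{+}, z_i^{-}$ where the paper uses a single $z_i$ and replaces $\neg x_i$ by $\neg z_i$.
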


\begin{proof}
TQBF-CNF is known to be PSPACE-complete~\cite{GareyJ79}. Obviously TQBF-CNF-1 is in PSPACE, we show PSPACE-hardness.
Given a QBF-CNF formula $F$, we transform $F$ to a QBF-CNF-1 formula $F'$ such that $F$ is \emph{true} if and only if $F'$ is \emph{true}. 
For that, we construct a formula $F'$ from $F$ as follows. We first add a fresh existential variable $z_i$ for every universal variable $x_i$. In addition, conjunct $F$ with clauses $(x_i \vee \neg z_i)$ and $(\neg x_i \vee z_i)$ that their conjunction is logically equivalent to $(x_i\leftrightarrow z_i)$. Finally, in every original clause $C$ of $F$ we replace every literal $x_i$ with $z_i$ and every literal $\neg x_i$ with $\neg z_i$. For the alternation order, we place the $z_i$ anywhere after $x_i$ (we can add dummy universal variables to keep the alternation interleaving order, as standard in such reductions). Since every original clause in $F$ contains now only existential variables, we have that $F'$ is indeed in the QBF-CNF-1 form that we described. 
Moreover, note that in $F'$ every clause that holds a universal literal is of a size of $2$.

Obviously, constructing $F'$ from $F$ is of polynomial time to $|F|$. Assume that $F$ is \emph{true}. Then there is a strategy $\stag_F$ for choosing existential variables such that $F$ is \emph{true}. Then define a strategy $\stag_{F'}$ that copies $\stag_F$, and for every choice for $z_i$, echos the assignment for $x_i$. That is set $z_i=true$ iff $x_i$ was set to $true$. Since every $x_i$ precedes $z_i$, this can be done. Then such a strategy sets $F'$ to be \emph{true}. 
Next assume $F'$ is \emph{true}. Then there is a strategy $\stag_{F'}$ for choosing existential variables such that $F$ is \emph{true}. Then set a strategy for $\stag_F$ that just repeats $\stag_{F'}$ while completely ignoring the assignment for $z$ variables (this can be done since every assignment for $z_i$ in $\stag_{F'}$  has to be the same assignment that was set for $x_i$). Again, it follows that such a strategy sets $F$ to be \emph{true}. 
Thus, TQBF-CNF-1 is PSPACE-complete as well.
\end{proof}

\subsubsection{MBSD for Tree-like Domains}\label{sec:tree-like}

We conclude this section by discussing a very specific tree-like domain structure.
We say that a dynamic domain $\D=(S,s_0,\delta,\lambda)$ is \emph{tree-like} 
if the transition relation $\delta$ induces a tree structure on the states, 
except for some states which may admit self-loops as their \textit{only} outgoing transition
(therefore such states would be leaves, if self-loops were not present).
For this class of domains, the exponential blowup on the number of traces 
does not occur, as for every state $s$ there exists only a unique trace ending in $s$ 
(modulo a possible suffix due to self-loops).

\begin{theorem}\label{thm:targetTreees}
Solving MBSD for target mapping specifications and tree-like $\D_A$ and $\D_B$
is in PTIME  for combined complexity, domain complexity, and mapping complexity.
\end{theorem}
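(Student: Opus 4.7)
The plan is to exploit the tree-like structure to eliminate the exponential memory component $M=(\{0,1\}^2)^k$ used in the general target construction of Section~\ref{sec:targetsExp}. The crucial observation is that in a tree-like domain $\D=(S,s_0,\delta,\lambda)$, for every state $s\in S$ there is (modulo self-loop suffixes at leaves) a \emph{unique} trace from $s_0$ to $s$; moreover, self-loops occur only at leaf-like states and contribute no new visited labels. Hence the set of subformulas satisfied along any trace depends only on the final state, not on the path taken to it.

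Concretely, I would precompute, for every $s \in S$, the set $C(s)\subseteq \{1,\ldots,k\}$ of indices $i$ such that $\lambda^A(s')\models\varphi_i$ for some $s'$ on the unique path from $s_0$ to $s$, and analogously $D(t)\subseteq\{1,\ldots,k\}$ for every $t\in T$ with respect to the $\psi_i$'s. A single depth-first traversal of each tree, propagating the satisfaction set from parent to child via $C(s')=C(s)\cup\{i:\lambda^A(s')\models\varphi_i\}$, computes all $C(s)$ in time $O(|S|\cdot k\cdot|\Phi|)$, and similarly for $D(t)$. I would then build the game arena exactly as in the point-wise case of Section~\ref{sec:pointExp}, namely $\A=(U,V,u_0,\alpha,\beta)$ with $U=V=S\times T$ and the natural alternation-preserving transitions induced by $\delta^A$ and $\delta^B$. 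This arena has size $O(|\D_A|\cdot|\D_B|)$, without any memory component. I would set the reachability target to
\[
  g=\{(s,t)\in U \mid \text{for every } i\in\{1,\ldots,k\},\ i\notin C(s)\ \text{or}\ i\in D(t)\},
\]
and define $\G_\P=(\A,\reach(g))$.

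The correctness argument follows the scheme of Lemma~\ref{lem:interLemma}, but is substantially simpler here thanks to path uniqueness. For any play $\rho$ of $\G_\P$ ending in $(s_n,t_n)$ and its projected traces $\tau^A,\tau^B$, the set of $\varphi_i$ satisfied along $\tau^A$ is exactly $C(s_n)$ and the set of $\psi_i$ satisfied along $\tau^B$ is exactly $D(t_n)$, because $\tau^A$ (resp.\ $\tau^B$) is necessarily the unique tree path from $s_0$ to $s_n$ (resp.\ $t_0$ to $t_n$), possibly extended by self-loops that introduce no new labels. Consequently $\lambda(\tau^A,\tau^B)\models\Phi$ iff $C(s_n)\subseteq D(t_n)$ iff $(s_n,t_n)\in g$. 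The equivalence between executable $B$-strategies in $\P$ and $P2$-strategies in $\G_\P$ is exactly the bijection established in Claims~\ref{lem:bijection} and~\ref{lem:bijection2}, so $\P$ admits a solution iff $P2$ has a winning strategy in the reachability game $\G_\P$.

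Finally, since $|\G_\P|$ is polynomial in $|\D_A|\cdot|\D_B|$ and reachability games are solvable in linear time in the arena~\cite{Mar75}, the overall procedure runs in polynomial time in $|\D_A|$, $|\D_B|$, and $|\Phi|$, yielding PTIME bounds for combined, domain, and mapping complexity simultaneously. The only delicate point, and hence the main technical obstacle, is verifying that self-loops do not invalidate the path-uniqueness argument: one must observe that such loops can only be taken at states that are leaves of the underlying tree, so they cannot introduce new states into the trace and therefore cannot change which $\varphi_i$ or $\psi_i$ become satisfied. Once this is checked, the rest of the proof is a direct simplification of the general target-mapping reduction.
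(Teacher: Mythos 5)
Your proposal is correct and follows essentially the same route as the paper: reuse the memoryless point-wise arena of Section~\ref{sec:pointExp}, observe that tree-likeness makes the play ending in each node $(s,t)$ unique (modulo label-preserving self-loops at leaves), precompute which subformulas are satisfied along that unique path, and solve the resulting reachability game. Your explicit DFS computation of the sets $C(s)$, $D(t)$ and your statement of the target set as the implication ``$i\notin C(s)$ or $i\in D(t)$'' are slightly more careful than the paper's wording of the $i$-accepting condition, but the argument is the same.
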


\begin{proof}
Given an MBSD instance $\P_{tree}$ with tree-like $\D_A$ and $\D_B$, 
consider the two-player game structure $\G_{\P_{tree}} = (\A, W)$ where the game arena 
$\A$ is as described in Section~\ref{sec:pointExp}.
It is immediate to see that since $\D_A$ and $\D_B$ are tree-like, so is $\A$, 
if we consider the edges defined by $\alpha$ and $\beta$~(which reflect those in $\D_A$ and $\D_B$).

Now, note that for every node $(s,t)\in U$ in the arena $\A$ 
and for $i=1,\ldots,k$, we can easily check whether the unique
play $\rho$ of $\A$ that ends in $(s,t)$ contains two~(possibly distinct) 
nodes with indices $j_i$ and $l_i$,
such that $\lambda^A(s_{j_i})\models \varphi_i$ and 
$\lambda^B(t_{l_i})\models \psi_i$.
If that is the case, we call $(s,t)$ an \emph{$i$-accepting} node.
Then, we define the \emph{set of accepting states} as 
$g =\{u\in U\mid u \text{ is }i\text{-accepting} \text{, for } i=1,\ldots,k\}$,
and the winning condition as $W=\reach(g)$.
In this way, $\G_{\P_{tree}}$ is a reachability game, constructed in time polynomial in the size of $\P_{tree}$, and solvable in linear time in the size of $\G_{\P_{tree}}$.
Result then follows since $\P_{tree}$ has a solution if and only if there is a solution to $\G_{\P_{tree}}$.
\end{proof}

As before, the combined and domain complexities are EXPTIME, for $\D_A$ and $\D_B$ described succinctly.

\section{Solving MBSD with General Mapping Specifications}\label{sec:general}
The final variant of mapping specifications that we study is of the most general form, 
where $\Phi$ can be any arbitrary \LTLf formula over $\Prop^A\cup\Prop^B$.
For this, we exploit the fact that for every \LTLf formula $\Phi$, 
there exists a DFA $\F_\Phi$ that accepts exactly the traces that satisfy $\Phi$~\cite{DegVa13}. 
Depending on which agent stops, the problem specializes into one of the following:
\begin{itemize}
	\item if $A$ stops: find a strategy for $B$ such that every trace always visits an accepting state of $\F_\Phi$;
	\item if $B$ stops: find a strategy for $B$ such that every trace eventually reaches an  accepting state of $\F_\Phi$.
\end{itemize} 

To solve this variant, we again reduce MBSD to a two-player game 
structure $\G_\P=(\A,W)$, as in our previous constructions, 
then solve a safety game, if $A$ stops, and a reachability game, if $B$ stops. 
To follow the mapping as the game proceeds, we incorporate $\F_\Phi$ into the arena. 
This requires a careful synchronization, as the propositional labels associated with the \emph{states} of dynamic domains
affect the \emph{transitions} of the automaton.

Formally, given an MBSD instance $\P = (\D_A, \D_B, \Phi, Ag_{stop})$, where $\D_A = (S, s_0,\delta^A,\lambda^A)$ and $\D_B = (T,t_0,\delta^A,\lambda^A)$, we construct the DFA $\F_\Phi = (\Sigma, Q, q_0, \eta, acc)$ as in~\cite{DegVa13}, where
 $\Sigma=2^{Prop^A \cup Prop^B}$ is the input alphabet.

Then, we define a two-player game arena $\A= (U, V, u_0, \alpha, \beta)$ as follows:
\begin{itemize}
    \item $U = S \times T \times Q$;
    \item $V = S \times T \times Q$;
    \item $u_0 = (s_0, t_0, q'_0)$, where $q'_0 = \eta(q_0, \lambda(s_0) \cup \lambda(t_0))$;
    \item\(
        \begin{aligned}[t]
        \alpha = & \{(s,t,q), (s',t,q)\mid (s,s')\in \delta^A\};
        \end{aligned}
        \)
    \item\(
        \begin{aligned}[t]
        \beta = & \{(s,t,q), (s,t',q')\mid (t,t')\in \delta^B \text{ and } \\ 
                &  \eta(q, \lambda(s) \cup \lambda(t')) = q'\}.
        \end{aligned}
        \)
\end{itemize}

Intuitively, $\A$ models the synchronous product of the arena defined in Section~\ref{sec:pointwise}, with  
the DFA $\F_\Phi$. 
As such, the DFA first needs to make a transition from its own initial state $q_0$ to read the labelling 
information of both initial states $s_0$ and $t_0$ of $\D_A$ and $\D_B$, respectively.
This is already accounted for by $q_0'$, in the initial state $u_0$ of the arena.
At every step, from current node $u = (s,t,q)$, $P1$ first chooses the next state $s'$ of $\D_A$,
then $P2$ chooses a state $t'$ of $\D_B$, both according to their transition relation,
and finally $\F_\Phi$ progresses, according to its transition function $\eta$ and
by reading the labeling of $s'$ and $t'$, from $q$ to 
$q'=\eta(q,\lambda^A(s')\cup\lambda^B(t'))$.

For the winning objective $W$, define the set of goal nodes 
$g = \{ u \in U \mid u = (s, t, q)$ such that $q \in acc \}$. That is, $g$ consists of 
the nodes in the arena where $\F_\Phi$ is in an accepting state. 
Then, we define 
$W=\safe(g)$ (to play a safety game), if $Ag_{stop}=A$, 
and $W=\reach(g)$ (to play a reachability game), if $Ag_{stop}=B$.

The following theorem states the correctness of the construction.

\begin{theorem}\label{thm:realizability}
There is a solution to $\P$ if and only if there a solution to $\G_\P$.
\end{theorem}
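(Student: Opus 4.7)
The plan is to follow the same pattern used for Lemma~\ref{lem:solpairwise} and Lemma~\ref{lem:interLemma}, exhibiting a bijection between executable strategies $\stag$ for $B$ in $\P$ and $P2$ strategies $\stag'$ in $\G_\P$, and then arguing that the winning condition $W$ captures exactly the required property on joint trace labels. First I will establish a structural claim analogous to Claim~\ref{lem:bijection}: for every executable strategy $\stag$ in $\P$ there is a naturally induced $P2$ strategy $\stag'$ in $\G_\P$ (and vice versa) so that the plays of $\G_\P$ compatible with $\stag'$ are exactly the sequences of the form $(s_0,t_0,q'_0)(s_1,t_0,q'_0)(s_1,t_1,q_1)\cdots(s_n,t_{n-1},q_{n-1})(s_n,t_n,q_n)$, where $\tau^A=s_0\cdots s_n$ is a trace of $\D_A$, $\tau^B=\induced(\tau^A)=t_0\cdots t_n$, and $q_i=\eta(q_{i-1},\lambda^A(s_i)\cup\lambda^B(t_i))$ with $q_0=q'_0$.

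The key ingredient of this claim is an invariant that I would prove by induction on the length of the play: in every node $(s_i,t_i,q_i)$ reached along a compatible play, the DFA component $q_i$ equals the state $\F_\Phi$ reaches after reading the joint trace label $\lambda(s_0\cdots s_i,\, t_0\cdots t_i)$. This is exactly what the definitions of $u_0$ (which already advances $\F_\Phi$ on $\lambda(s_0)\cup\lambda(t_0)$) and of $\beta$ (which advances $\F_\Phi$ on $\lambda(s)\cup\lambda(t')$ only when $B$ moves, after $A$ has chosen $s'$) are designed to ensure; the $P1$ transitions in $\alpha$ deliberately leave $q$ untouched, because the DFA must digest the joint pair in a single step. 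Consequently, $q_n\in acc$ iff $\lambda(\tau^A,\tau^B)\models\Phi$.

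With this correspondence in place I would split on $Ag_{stop}$. If $Ag_{stop}=A$, then by Definition~\ref{def:solution}(\ref{sol:1}) $\stag$ is a solution iff $\lambda(\tau^A,\induced(\tau^A))\models\Phi$ for every trace $\tau^A$ of $\D_A$, which by the invariant is equivalent to requiring that \emph{every} node of every compatible play lies in $g$, i.e.\ $\stag'$ is winning for $W=\safe(g)$. If instead $Ag_{stop}=B$, then by Definition~\ref{def:solution}(\ref{sol:2}) every infinite trace of $\D_A$ must have a finite prefix whose induced joint trace satisfies $\Phi$, which by the invariant is equivalent to requiring that every play compatible with $\stag'$ visits some node of $g$, i.e.\ $\stag'$ is winning for $W=\reach(g)$. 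Going from a winning $\stag'$ back to a solution $\stag$ is symmetric and follows exactly the recipe of Lemma~\ref{lem:solpairwise}: project the $U$-component and use the $\eta$-invariant to recover $\tau^B=\induced(\tau^A)$.

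The main obstacle I expect is handling the asymmetric placement of the DFA update inside a single round (advance only on $\beta$-moves, not on $\alpha$-moves) and carrying it through the inductive invariant cleanly; a secondary subtlety is the $Ag_{stop}=B$ case, where I must argue that reaching $g$ in the game corresponds to the existence of a \emph{finite prefix} of the (possibly infinite) $\D_A$-trace whose induced joint label satisfies $\Phi$, which works because once the DFA enters an accepting state along the play we can stop there and read off a satisfying prefix, while conversely the existence of such a prefix forces the compatible play to pass through an accepting node.
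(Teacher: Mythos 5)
Your plan is correct and follows essentially the same route as the paper's own (much terser) proof: both rest on the one-to-one correspondence between plays of $\G_\P$ compatible with the induced strategy and pairs $(\tau^A,\induced(\tau^A))$, together with the invariant that the DFA component of each $U$-node is the state $\F_\Phi$ reaches on the joint trace label, so that membership in $g$ coincides with $\lambda(\tau^A,\induced(\tau^A))\models\Phi$; the paper only spells out the $Ag_{stop}=A$ case and declares the other similar, whereas you treat both. Your version is simply a more detailed elaboration (explicit strategy bijection in the style of Claim~\ref{lem:bijection}, explicit induction on the DFA update placed on $\beta$-moves) of the same argument.
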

\begin{proof}
Let $Ag_{stop}=A$ (the case for $Ag_{stop}=B$ is similar), thus $\G_\P = (\A, \safe(g))$.
By Definition~\ref{def:solution}, $\P$ has a solution $\stag$ iff for every trace $\tau^A$ of $\D_A$, we have 
that $\lambda(\tau^A,\induced(\tau^A))\models\Phi$. That is, $\lambda(\tau^A,\induced(\tau^A))$ is accepted by $\F_\Phi$, i.e., the run on $\F_\Phi$ of $\lambda(\tau^A,\induced(\tau^A))$ ends at an accepting state $q \in acc$. Due to the strict one-to-one correspondence between the transitions of $\G_\P$ with those of 
$\D_A$, $\D_B$ and $\F_\Phi$, we can simply transform $\stag$ to be such that $\stag: V^+ \to U$. 
Hence, every play $\rho = \rho_0 \rho_1 \cdots \rho_n$~of $\A$ compatible with $\stag$ 
is such that $\rho_k \in g$ for every even $k. 0 \leq k \leq last(\rho)$.
By definition of safety game, this holds iff $\stag$ is a winning strategy of $\G_\P = (\A, \safe(g))$.
\end{proof}
Clearly, the constructed winning strategy $\stag$ from the reduced game $\G_\P$ is a solution to $\P$.

Finally, we obtain the following complexity result for the problem in its most general form.
\begin{theorem}\label{thm:generalComplexity}
Solving MBSD for general mapping specifications can be done in 2EXPTIME in combined complexity and mapping complexity, 
and in PTIME in domain complexity.
\end{theorem}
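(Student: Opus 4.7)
The plan is to combine the reduction constructed just before the theorem with the standard complexity bound for translating \LTLf to \DFA and the known complexity of solving safety/reachability games. By Theorem~\ref{thm:realizability}, solving $\P$ reduces to solving $\G_\P=(\A,W)$, where $W=\safe(g)$ or $W=\reach(g)$ depending on $Ag_{stop}$. Both classes of games are solvable in time linear in the size of the arena, so the whole analysis comes down to bounding $|\A|$ as a function of $|\D_A|$, $|\D_B|$ and $|\Phi|$.

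First I would recall the DFA construction for \LTLf from~\cite{DegVa13}: for every \LTLf formula $\Phi$ over $\Prop^A\cup\Prop^B$, a DFA $\F_\Phi$ accepting exactly the models of $\Phi$ can be built of size doubly exponential in $|\Phi|$, and this is worst-case tight. Plugging into the construction of $\A=(U,V,u_0,\alpha,\beta)$, the node set $U\cup V$ has cardinality $O(|S|\cdot|T|\cdot|Q|)$, and the transition relations $\alpha,\beta$ have size bounded by $|\delta^A|\cdot|T|\cdot|Q|$ and $|S|\cdot|\delta^B|\cdot|Q|$, respectively, while the goal set $g$ is simply $\{(s,t,q)\mid q\in acc\}$ and is definable on the fly. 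Hence $|\A|$ is polynomial in $|\D_A|\cdot|\D_B|$ and doubly exponential in $|\Phi|$.

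From this bound the three complexity figures follow directly. For \emph{combined} and \emph{mapping} complexity, the arena has size $O(|\D_A|\cdot|\D_B|\cdot 2^{2^{O(|\Phi|)}})$, so constructing $\A$ and solving the resulting safety or reachability game takes time doubly exponential in the size of the input, i.e.\ 2EXPTIME. For \emph{domain} complexity, $|\Phi|$ is fixed, hence $|Q|$ is a constant and $|\A|$ is polynomial in $|\D_A|+|\D_B|$; both the product construction and the linear-time algorithms for safety/reachability games then run in PTIME.

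The only real obstacle is controlling the DFA blow-up, which is not actually something we have to do ourselves: the $2^{2^{O(|\Phi|)}}$ bound is inherited verbatim from~\cite{DegVa13}, and the product with $\D_A\times\D_B$ only introduces a polynomial multiplicative factor. Everything else (the synchronous composition, the choice of $W$ according to $Ag_{stop}$, and the game-solving step) has already been handled in the construction preceding Theorem~\ref{thm:realizability} and by the PTIME solvability of safety and reachability games recalled in Section~\ref{sec:2players}, so the proof essentially consists of assembling these ingredients and reading off the three complexity bounds.
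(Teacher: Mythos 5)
Your proposal is correct and follows essentially the same route as the paper's own proof: build the doubly-exponential DFA $\F_\Phi$ from~\cite{DegVa13}, take its polynomial product with $\D_A$ and $\D_B$ to form the arena, and solve the resulting safety or reachability game in time linear in the arena, reading off the three complexity bounds. Your version merely spells out the size bounds on $U\cup V$, $\alpha$, $\beta$ and $g$ a bit more explicitly than the paper does.
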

\begin{proof}
Constructing the DFA $\F_\Phi$ from the mapping specification $\Phi$ is in 2EXPTIME in the number of sub-formulas of 
$\Phi$~\cite{DegVa13}. 
Once $\F_\Phi$ is constructed, observe that the game arena $\A$ is the product of $\D_A$, $\D_B$ and the DFA $\F_\Phi$, 
which requires, to be constructed, polynomial time in the size of $|\D_A|+|\D_B|+|\F_\Phi|$. 
Moreover, both safety and reachability games can be solved in linear time in the size of $\A$, from which
it follows that the MBSD problem for general mappings is in 2EXPTIME in combined complexity, 
PTIME in domain complexity, and 2EXPTIME in mapping complexity.
\end{proof}

\section{Related Work}\label{sec:related}

\emph{Linear Temporal Logic on finite traces}~(\LTLf)~\cite{DegVa13} 
has been widely adopted in different areas of CS and AI, 
as a convenient way to specify finite-trace properties, due to the way it 
finely balances expressive power and reasoning complexity.
It has been used, e.g., in Machine Learning
to encode a-priori knowledge~\cite{CamachoIKVM19,GiacomoIFP19,Xie2021EmbeddingST};
in strategy synthesis to specify desired agent tasks~\cite{DegVa15,ZhuTLPV17,CamachoBMM18};
in Business Process Management~(BPM) as a specification language for process execution 
monitoring~\cite{PesicSA07,GiacomoMGMM14,CiccioMMM17}.
It has also found application as a natural way to capture {non}-Markovian rewards in  
Markov Decision Processes~(MDPs)~\cite{BrafmanGP18}, MDPs policy synthesis~\cite{AndrewGandalf}, and 
non-Markovian planning and decision problems~\cite{BrafmanG19}. 
Here we show yet another use of \LTLf. We use it to relate the behaviors in two separated domains through mapping 
specifications so as to control the mimicking between the two domains.

Mimicking has been recently studied in Formal Methods~\cite{AmramBFTVW21}. 
In~\cite{AmramBFTVW21}, the notion of \emph{mimicking} is specified in \emph{separated GR($k$) formulas}, a strict fragment of \LTL. This makes the setting there not suitable for specifying mimicking behaviors of intelligent agents, since an intelligent agent will not keep acting indefinitely long, but only for a finite (but unbounded) number of steps. Moreover, the distinctions between the two systems and the mimicking specification were not singled out. This makes it difficult to provide a precise computational complexity analysis with respect to the systems, and the mimicking specification, separately.

A strictly related work, though more specific, is
\emph{Automatic Behavior Composition}~\cite{DGPS13}, 
where a set of available behaviors must be orchestrated in order to mimic a 
desired, unavailable, target behavior. 
That work deals with a specific mapping specification over actions,
corresponding to the formal notion of \emph{simulation}~\cite{Milner71}. 
This current work devises a more general framework and a solution approach for a wider spectrum of 
mapping specifications, in a finite-trace framework.

Finally, we want to notice that our framework is similar to what studied in data integration and data exchange \cite{Lenzerini02,FaginKMP05,DeGiacomoLLR07,Kolaitis18}, where there are source databases, target 
databases, and mapping between them that relate the data in one with the data in the other. 
While similar concepts can certainly be found in our framework, here we do not consider data but dynamic 
behaviors, an aspect which makes the technical development very different.

\section{Conclusion and Discussion}\label{sec:conclusion}

We have studied the problem of mimicking behaviors in separated domains, 
in a finite-trace setting where the notion of \emph{mimicking} is captured by \LTLf 
mapping specifications.
The problem consists in finding a strategy that allows an
agent $B$ to mimic the behavior of another agent $A$. We have
devised an approach for the general formulation, 
based on a reduction to suitable two-player games,
and have derived corresponding complexity results. 
We have also identified two specializations of the problem, based on the 
form of their mappings, which show simpler approaches and
better computational properties. For these, we have also provided 
illustrative examples.

A question that naturally arises, for which we have no conclusive answer yet, 
is to what extent domain separation and 
possibly separated types of conditions can be exploited
to obtain complexity improvements
in general, not only on the problems analyzed here.
In this respect, we take the following few points for discussion. 

We first note that the framework in~\cite{AmramBFTVW21}
can be adapted to an infinite-trace variant of MBSD, with target mapping specifications of the form  
$\Phi =\bigwedge_{l=1}^k (\bigwedge_{i=1}^{n_l} \Box\Diamond(\varphi_{l,i}) \rightarrow \bigwedge_{j=1}^{m_l} \Box\Diamond(\psi_{l,j}))$. 
The results in~\cite{AmramBFTVW21}, which build heavily on domain separation, 
can be tailored to obtain a polynomial-time algorithm for (explicit) separated domains 
in combined complexity. In contrast, Theorem~\ref{thm:PSPACEhard} in this paper shows that the finite variant is PSPACE-hard already for much simpler mappings. 
This gap seems to suggest that domain separation cannot prevent the book-keeping that is possibly mandatory for the finite case.
Note however that Theorem~\ref{thm:targetsalg} of this paper can be easily extended to specifications 
of the form 
$\Phi' =\bigwedge_{l=1}^k ( \bigwedge_{i=1}^{n_l} \Diamond(\varphi_{l,i}) \rightarrow \bigwedge_{j=1}^{m_l} \Diamond(\psi_{l,j}))$,
yielding an algorithm of time polynomial in the domain size but exponential in the number of Boolean subformulas in $\Phi'$.

A second point of observation is the following. While the result in Section~\ref{sec:general} provides an upper bound 
for mappings expressed as general \LTLf formulas, one can consider a more relaxed form 
 $\Phi = \bigwedge_{i\leq k}(\phi_i\rightarrow \psi_i)$ where each $\phi_i$~(resp. $\psi_i$) 
is an \LTLf formulas over $\Prop^A$~(resp. $\Prop^B$) only. 
While still PSPACE-hard (see Theorem~\ref{thm:PSPACEhard}), it is tempting to use some form of memory keeping as done in Theorem~\ref{thm:targetsalg} 
to avoid the 2EXPTIME  complexity.
The challenge, however, is that every attempt to monitor satisfaction for even a single \LTLf sub-formula, whether $\phi_i$ or $\psi_i$, seems to require an \LTLf to DFA construction that already yields
the 2EXPTIME cost. 
Another approach could be  to construct a DFA separately for each \LTLf sub-formula, then combine them along with the product of the domains and continue as in Section~\ref{sec:general}. This however involves a game with a state space to explore that is the (non-minimized) 
product of the respective DFAs, and is typically much larger than the~(minimized) DFA constructed directly 
from $\Phi$ (as observed in~\cite{TabajaraV19,ZTPV}). Moreover, in practice, state-of-the-art tools for translating 
\LTLf to DFAs~\cite{BansalLTV20,GiacomoF21} tend to take maximal advantage of automata minimization.
How to avoid the DFA construction in such separated mappings to gain computational complexity advantage is yet to be explored.

\vskip 0.2in
\bibliography{ref}
\bibliographystyle{theapa}

\end{document}